\documentclass[article, 10pt, conference]{ieeeconf}   

\IEEEoverridecommandlockouts                              

\overrideIEEEmargins                                      

\usepackage{import}
\usepackage[pdftex]{graphicx}
\usepackage{cite}
\usepackage{verbatim}		
\usepackage{amsmath}
\usepackage{amssymb}

\usepackage{amsthm}
\usepackage{mathtools}	
\usepackage{grffile}	
\usepackage[tight,footnotesize]{subfigure}
\usepackage{microtype} 
\usepackage{color}
\usepackage{url}
\usepackage[ruled,vlined,linesnumbered]{algorithm2e}
\usepackage{bm} 
\usepackage{comment}
\usepackage{arydshln}

\usepackage{enumitem}

\usepackage{adjustbox}
\usepackage{tikz}
\usetikzlibrary{shadings, patterns, angles, quotes, arrows.meta, shapes, decorations.pathmorphing, decorations.shapes, decorations.text, automata, positioning}
\usetikzlibrary{calc,intersections,arrows.meta}
\usepackage{pgfplots}

\usepackage{hyperref}
\hypersetup{
	colorlinks=true,
	linkcolor=blue,
	citecolor=red,
}

	\tikzset{
	pil/.style={
		->,
		thick,
		shorten <=2pt,
		shorten >=2pt,}
}

\theoremstyle{remark}

\newenvironment{remark}
{\noindent
\pushQED{\qed}\remarkx}
{\popQED\endremarkx}

\newenvironment{example}
{\noindent\rule{8.6cm}{0.5pt}
\pushQED{\qed}\examplex}
{\popQED\endexamplex}

\theoremstyle{definition}
\newtheorem{defn}{Definition}

\newtheorem{problem}{Problem}
\newtheorem*{problem*}{Problem}

\theoremstyle{plain}
\newtheorem{theorem}{Theorem}
\newtheorem{lemma}{Lemma}
\newtheorem{coroll}{Corollary}
\newtheorem{prop}{Proposition}





\newcommand{\cmmnt}[1]{}





\newcommand{\scalemath}[2]{\scalebox{#1}{\mbox{\ensuremath{\displaystyle #2}}}}

\useRomanappendicesfalse

\title{\LARGE \bf SO(3) attitude controllers and the alignment of robots with non-constant 3D vector fields}

\author{Jesús Bautista, Hector Garcia de Marina} %

\begin{document}

\maketitle
\thispagestyle{empty}
\pagestyle{empty}



\section{Introduction} 
\label{sec: intro}

This technical note aims to introduce geometric controllers to roboticist for aligning \emph{3D robots} with non-constant 3D vector fields. This alignment entails the control of the robot's 3D attitude. We derive with \emph{excessive} detail all the calculations needed for the analysis and implementation of the controllers.

The sources for the technical results and formal proofs employed throughout this note can be found in \cite[Section 3.2]{modern_robotics} for a concise introduction to $\text{SO}(3)$, \cite{so3_catalanes} for Lie groups in robotics, and \cite{bullo_book} for a more profound understanding of geometric controllers.

\section{Preliminaries, robot kinematics and problem formulation}
\label{sec: pre}

We represent a robot in the 3D space with a \emph{Body-fixed} Cartesian frame of coordinates $\mathcal{F}_B$. The position and attitude of the robot with respect to the \emph{Earth-fixed} frame $\mathcal{F}_E$ are respectively described by the vector $p \in \mathbb{R}^3$ and the matrix

\begin{equation} \label{eq: R}
   R = \begin{bmatrix}x^E & y^E & z^E\end{bmatrix},
\end{equation}
where $x^E, y^E, z^E \in S^2$ are the orthonormal column vectors represented in $\mathcal{F}_E$ which generate the vector basis of $\mathcal{F}_B$ (see \autoref{fig: fe_vs_fb}). Indeed, the three more familiar Euler angles that describe the orientation of $\mathcal{F}_B$ with respect to $\mathcal{F}_E$ are encoded in the (rotation) matrix $R$ \cite[Section 4]{so3_tuto}. The main advantage of representing rotations with $R$, as opposed to Euler angles, lies in the fact that the attitude is uniquely defined globally; i.e., every physical orientation of our robot corresponds to a unique rotational matrix. 


In classical mechanics, we define the motion of an object as the variation of its body-fixed reference frame over time concerning an inertial reference frame, i.e., we need to determine/measure the linear and angular velocities of the robot relative to $\mathcal{F}_E$. We always have the freedom to represent inertial variables in more practical (although no convenient sometimes) frames of coordinates. For example, an Inertial Measurement Unit (IMU) measures linear accelerations and angular velocities relative to $\mathcal{F}_E$, but the reading is given in $\mathcal{F}_B$ since the IMU is onboard the robot.

\begin{figure}
\centering
\includegraphics[trim={3cm 3.8cm 3cm 6.5cm}, clip, width=0.8\columnwidth]{./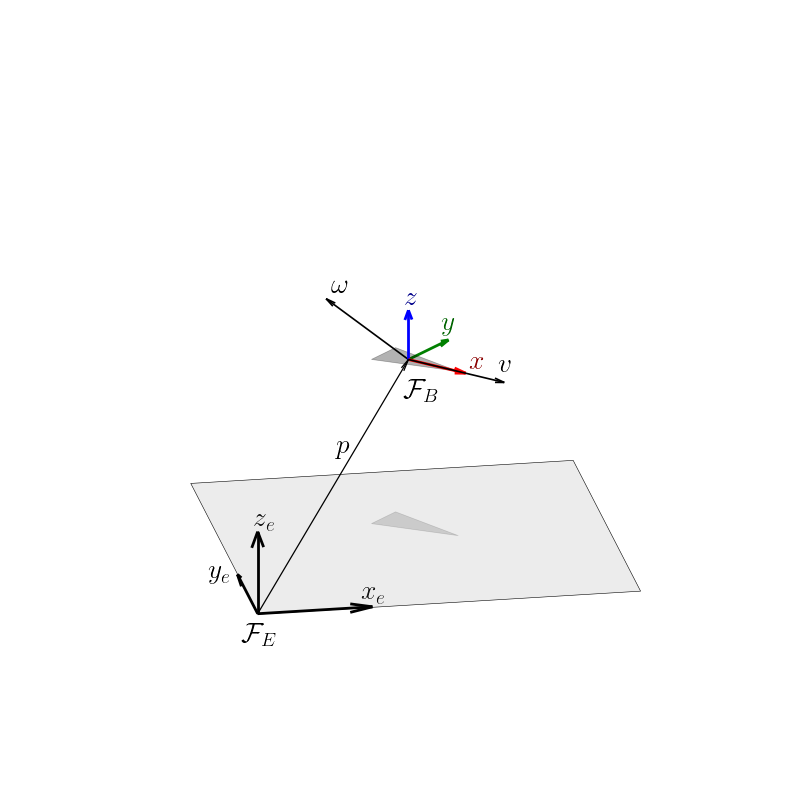}
\caption{This figure illustrates a 3D unicycle robot moving at a linear velocity $v$ and rotating at an angular velocity $\omega$. These velocities are observed from the inertial frame $\mathcal{F}_E$ but represented in the non-inertial frame $\mathcal{F}_B$, which is fixed at the body of the robot. $\{x_e, y_e, z_e\}$ and $\{x, y, z\}$ are the vector bases of $\mathcal{F}_E$ and $\mathcal{F}_B$, respectively.}
\label{fig: fe_vs_fb}
\end{figure}

In our scenario, considering $v^E \in \mathbb{R}^3$ and $\omega^E \in \mathbb{R}^3$ as the linear and angular velocities of the robot measured from $\mathcal{F}_E$ respectively, we denote its representation in $\mathcal{F}_B$ as  $v = \begin{bmatrix}v_{x} & v_{y} & v_{z}\end{bmatrix}^\top  \in \mathbb{R}^3$ and $\omega = \begin{bmatrix}\omega_{x} & \omega_{y} & \omega_{z}\end{bmatrix}^\top  \in \mathbb{R}^3$. Thus, considering the rotation matrix $R$ as the linear transformation that, when applied to a vector represented in $\mathcal{F}_B$, returns its representation in $\mathcal{F}_E$, we have that
\begin{equation} \label{eq: R_transform}
    v^E = R v, \quad \omega^E = R \omega,
\end{equation}
where we can easily extract the kinematics for the position $p$ given by
\begin{equation}
\dot p(t) = R(t)v.
\end{equation}

The evolution of the signal $R(t)$ depends on the actuation $\omega(t)$ and can be straightforwardly derived from the Rodrigues' formula for the rotation of an angle $\theta\in\mathbb{R}$ around a unit axis $l = \begin{bmatrix}l_x & l_y & l_z \end{bmatrix}^\top$, which can be written in terms of the matrix exponential as
\begin{equation} \label{eq: rod_exp}
R_l(\theta) = \exp(\theta S_l),
\end{equation}
where $S_l = \left[\begin{smallmatrix}0 & -l_z & l_y \\
l_z & 0 & -l_x \\
-l_y & l_x & 0
\end{smallmatrix}\right]$; therefore, we have that
\begin{equation}
\label{eq: Omega11}
\frac{\mathrm{d}}{\mathrm{dt}}R_l(\theta) = \dot\theta S_l \exp(\theta S_l) = \exp(\theta S_l) \dot\theta S_l = R_l(\theta)\Omega,
\end{equation}
where 
\begin{equation}
\label{eq: Omega1}
\Omega = \begin{bmatrix} 0 & -\omega_{z} & \omega_{y} \\ \omega_{z} & 0 & -\omega_{x} \\ -\omega_{y} & \omega_{x} & 0 \end{bmatrix}.
\end{equation}
Now, we are ready to write the kinematics of our 3D robot as
\begin{equation}
\label{eq: kin}
  \begin{cases}
       \begin{array}{ll}
           \dot p(t) = R(t) v \\
           \dot R(t) = R(t) \Omega \\
       \end{array},
   \end{cases}
\end{equation}
where $\Omega$ through $\omega$ is our actuation over the robot since we decided that $v$ will be constant.

\begin{remark} \label{rem: v}
In order to focus on the attitude control of the robot, we will consider $v$ in (\ref{eq: kin}) constant; in particular, $v = \begin{bmatrix}1 & 0 & 0\end{bmatrix}^\top.$
\end{remark}

Finally, we are ready to pose the following question.
\begin{problem} \label{prolem: main}
Given $R(t)$ as the signal for the 3D attitude representation of our robot and $R_a(p,t)$ as the \emph{attitude target}. What should be the actuation $\Omega(t)$ in (\ref{eq: kin}) through $\omega$ such that $R(t) \to R_a(p,t)$ as $t\to\infty$?
\end{problem}

In particular, we will construct $R_a(p,t)$ from a 3D vector field, and first, we will cover the trivial case where $v = 0$. The motivation for considering $R_a(p,t)$ comes from the necessity of some robotic problems where a mobile robot needs to track a given non-constant \emph{direction} given by a 3D vector field, e.g., in order to track a path \cite{yao2021singularity} or to track the source of a scalar field \cite{acuaviva2023resilient}.





\section{Brief introduction to Lie groups and $\text{SO}(3)$}

A group is a set $G$ of elements equipped with a binary operation ``$\cdot$'' that combines two elements $a,b\in G$ and outputs another element $c$ in the same set $G$, i.e., $a\cdot b = c$. This operation must be \emph{associative} and must count with an identity element $e$, i.e., $a\cdot e = a$, and for each element in $G$, there is an inverse so that the result of the binary operation is the identity, e.g., $a\cdot b = e$ where $b$, sometimes denoted $a^{-1}$, is the inverse of $a$.

A \textit{Lie group} joins together the concepts of \textit{group} and \textit{smooth manifold}, a differentiable topological space \cite{wikimani}. Sometimes, the elements of a Lie group can be associated with a physical meaning, e.g., the orientation of a reference frame, and that is why it makes them interesting in robotics. The local properties of a smooth manifold allow a coordinate system around any point (an element of the group). This fact enables us to identify a vector space around every element of the group, also known as \emph{tangent space}. For example, the Earth's surface is a manifold, and its flat maps are tangent spaces. The tangent space at the identity of the Lie group, endowed with a bilinear nonassociative product called the \textit{Lie bracket}, is known as the \textit{Lie algebra}. Indeed, a Lie group is almost completely determined by its Lie algebra. Thus, for many purposes, one may operate directly in the Lie algebra instead of with the elements of its Lie group.

In this note, we focus on the Lie group known as $\text{SO}(3)$. Each element of this group represents a rotation in 3D, and all possible rotations are present. In particular, each element $R$ of the group is a linear transformation $\mathbb{R}^3 \rightarrow \mathbb{R}^3$; indeed, the element $R$ is also represented and known as a \emph{rotation matrix}. It is a Lie group since we can associate each element $R$ of $\text{SO}(3)$ with a point in a smooth manifold. Although it is beyond the scope of this note to show it, for the sake of completeness, we note that the manifold $\text{SO}(3) \simeq \mathrm{RP}^3$ and its fundamental group is $\pi\left(\text{SO}(3)\right) = \mathbb{Z}/2\mathbb{Z}$. 

\subsection{The special Orthogonal Group $\text{SO}(3)$} \label{sec: so3}

Let us define the group $\text{SO}(3)$ as
\begin{equation}
\label{eq: SO3}
    \text{SO}(3) := \{R \in \mathbb{R}^{3 \times 3} \,|\, R^\top R = I,\, \det(R) = 1\}.
\end{equation}
Since $\text{SO}(3)$ is also a smooth manifold, i.e., it is a Lie group, there exists a tangent space at every point $R$, and the velocity of $R$, i.e., $\dot R$, belongs to such a space. We can calculate $\dot R$ by taking the time derivative of the first constraint in (\ref{eq: SO3}) as
\begin{align}
\dot R^\top R + R^\top\dot R &= 0 \nonumber \\
\Leftrightarrow \dot R &= - R \dot R^\top R \nonumber \\
\Leftrightarrow \dot R &= R \Omega, \label{eq: RdotOmega}
\end{align}
where $\Omega$ has already been found in (\ref{eq: Omega11}), and we note that it is indeed skew-symmetric since $\dot R ^\top R = -(\dot R ^\top R)^\top$. Looking at the definition, the Lie algebra is the tangent space associated with $\text{SO}(3)$ at the identity element of the group, i.e., setting $R=I$ in (\ref{eq: RdotOmega}) we have that $\Omega$ is in the Lie algebra of $\text{SO}(3)$. We formally denote the tangent spaces of $\text{SO}(3)$ as
\begin{equation}
    \mathrm{T}_R\text{SO}(3) := \{R\Omega \, | \, \Omega \in \mathfrak{so}(3)\},
\end{equation}
where $\mathfrak{so}(3)$ is the vector space of the Lie algebra. This vector space has dimension $3$ because each of its elements can be written as
\begin{equation}
\Omega = \omega_x \left[\begin{smallmatrix}0 & 0 & 0 \\
0 & 0 & -1 \\
0 & 1 & 0
\end{smallmatrix}\right] + \omega_y \left[\begin{smallmatrix}0 & 0 & 1 \\
0 & 0 & 0 \\
-1 & 0 & 0
\end{smallmatrix}\right] + \omega_z \left[\begin{smallmatrix}0 & -1 & 0 \\
1 & 0 & 0 \\
0 & 0 & 0
\end{smallmatrix}\right],
\end{equation}
where the coordinates can be identified as the three angular velocities in $\omega$. Note that there is a bijective map between $\mathfrak{so}(3)$ and $\mathbb{R}^3$, i.e., $\mathfrak{so}(3) \cong \mathbb{R}^3$. This fact allows us to (re)define this vector space as
\begin{equation}
    \mathfrak{so}(3) := \{x^\wedge \in \mathbb{R}^{3 \times 3} \, | \, x \in \mathbb{R}^{3},\, x^\wedge = - (x^\wedge)^\top \},
\end{equation}
where we have introduced the linear map \emph{hat} $\wedge : \mathbb{R}^3 \rightarrow \mathfrak{so}(3)$ as the transformation of a given vector $x \in \mathbb{R}^3$ to a skew-symmetric matrix $x^\wedge \in \mathbb{R}^{3\times 3}$ similarly as we have constructed $\Omega$ in (\ref{eq: Omega1}) from $\omega$. This linear map is interesting since we have the following identity
\begin{equation} \label{eq: cross_identity}
    x^\wedge y = x \times y,
\end{equation}
for any $y \in \mathbb{R}^3$, where $\times$ denotes the standard cross product.

To finish the bijection between $\mathfrak{so}(3)$ and $\mathbb{R}^3$, for the \emph{recovery} of $\omega$ from a given $\Omega$ we define the linear map \emph{vee} $\vee : \mathfrak{so}(3) \rightarrow \mathbb{R}^3$ as the opposite of the hat map, i.e., $\omega = \Omega^\vee$.





\subsection{Geodesics}

Given two elements $q_a,q_b \in M$, where $M$ is a smooth manifold and $t_a,t_b \in \mathbb{R^+}$ with $t_a < t_b$, we let

\begin{align}
    \mathcal{C}(q_a, q_b) := \{ \gamma(t): [t_a,t_b] \rightarrow M \; | \; & \gamma(t_a) = q_a, \gamma(t_b) = q_b, \nonumber \\ &\gamma \in C^1\},
\end{align}
be the set of $C^1$-curves $\gamma$ in $M$ which connects $q_a$ with $q_b$. We define the \textit{Riemannian length} of a curve $\gamma$ as
\begin{equation} \label{eq: rieman_len}
    \ell (\gamma) = \int_{t_a}^{t_b} \sqrt{\langle \; \dot\gamma(t), \dot\gamma(t)\; \rangle_{\gamma(t)}} \, \mathrm{d}t,
\end{equation}
where $\langle \cdot , \cdot \rangle_x : T_x M \times T_x M \rightarrow \mathbb{R}$ is an inner product on $T_x M$ of $M$. A \textit{Riemannian metric} $\langle \cdot , \cdot \rangle$ of $M$ is a smoothly chosen inner product $\langle \cdot , \cdot \rangle_x$ on each of the tangent spaces $T_x M$ of $M$. 

Among all the curves $\gamma \in \mathcal{C}(q_a, q_b)$, we are interested in those with the minimum length since they define a \textit{Rimannian distance} between $q_a,q_b \in M$ as
\begin{equation} \label{eq: rieman_dis}
    d_M(q_a,q_b) = \mathrm{inf}\{\ell(\gamma) \; | \; \gamma \in \mathcal{C}(q_a, q_b), \; q_a, q_b \in M\}.  
\end{equation}
In the following example, we show that this general definition of distance matches with the particular case of Euclidean distance.

\begin{example} \label{ex: line}
    Consider $M = \{(x,y) \in \mathbb{R}^2\}$, and two points $q_a, q_b \in \mathbb{R}^2$ connected by a curve $\gamma(t) = (x(t), y(t)) \subset \mathbb{R}^2$ such that $\gamma(0) = q_a = (x_a, y_a)$ and $\gamma(\Delta t) = q_b = (x_b, y_b)$. Since we are in $\mathbb{R}^2$, let $\langle \cdot , \cdot \rangle$ be the well-known inner product between two vectors. In this scenario, it is straightforward to identify that the curve $\gamma(t)$ with minimum length is the one with constant velocity $\dot \gamma(t)$ for every $t \in [0,\Delta t]$, i.e., a straight line. Moreover, the length of such a curve as defined in \eqref{eq: rieman_len} yields
    \begin{align*}
        \int_{0}^{\Delta t} \sqrt{\left(\dot x(t)\right)^2 + \left(\dot y(t)\right)^2} \mathrm{d}t &= \sqrt{\left(\dot x \Delta t\right)^2 + \left(\dot y \Delta t\right)^2}\\
        &= \sqrt{\left(x_b - x_a\right)^2 + \left(y_b - y_a\right)^2},
    \end{align*}
    which is the Pythagorean equation. Therefore, we have that the length of our chosen $\gamma$ is in fact the Euclidean distance between $\gamma(0)$ and $\gamma(\Delta t)$.
    
\end{example}

The straight line within a flat manifold of Example \ref{ex: line} is the most illustrative case of \textit{geodesic}. Formally speaking, the geodesics are curves that allow us to go from $q_a$ to $q_b$ while maintaining a constant velocity in $M$; i.e., they are \emph{zero-acceleration} curves \cite[Definition 3.102]{bullo_book}; of course, the acceleration here refers to within the manifold strictly. Indeed, it is possible to show for every manifold $M$ that the curves with minimum length are always geodesics \cite{do1992riemannian}, so we define the length-minimizing geodesic as follows.

\begin{defn}[Length-minimizing geodesic]
    Let $M$ be a manifold with a Riemannian metric $\langle \cdot , \cdot \rangle$. The length-minimizing geodesic, denoted as $\gamma_g$, is a $C^1$-curve $\gamma : [t_a,t_b] \rightarrow M$ with the property that $\ell(\gamma_g) = d_M(\gamma_g(t_a),\gamma_g(t_b))$.
\end{defn}

\begin{figure}[t]
    \centering
    \includegraphics[trim={2.3cm 3.5cm 1.8cm 3.5cm}, clip, width=\columnwidth]{./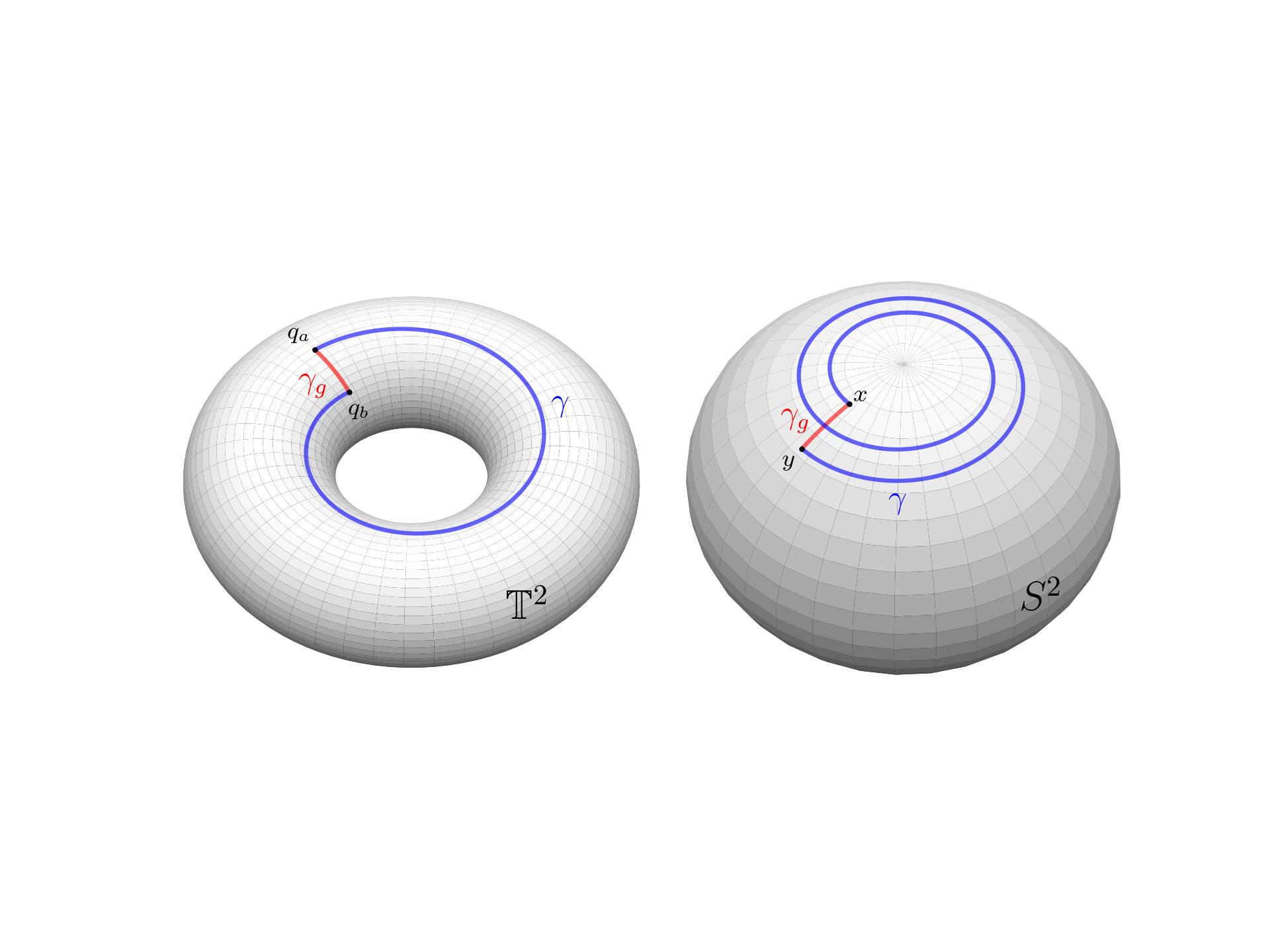}
    \caption{This figure illustrates, for two different manifolds, a pair of geodesics ($\gamma$ in blue color and $\gamma_g$ in red color)  that connects $q_a, q_b \in \mathbb{T}^2$ (left) and $x,y \in S^2$ (right). Note that $\gamma_g$ is the length-minimizing geodesic.}
    \label{fig: geodesics}
\end{figure}

Let us note that the definition of length in \eqref{eq: rieman_len} and the geodesics has a potential physical meaning. For any system whose state $x\in M$, we can design $\langle  \dot\gamma(t), \dot\gamma(t) \rangle$ according to their dynamics to represent kinetic energy along $\gamma$. Hence, if $\gamma$ is a length-minimizing geodesic, then $\ell (\gamma)$ minimizes the kinetic energy integral. Moreover, as the geodesics are zero acceleration curves, they represent the trajectories of free particles within the manifold, where motion is entirely determined by the curvature of $M$. This concept aligns with the fundamental idea in general relativity, where particles move along geodesics within space-time, and the curvature of such a manifold is induced by the presence of a mass (gravity).

Even when $M \subset \mathbb{R}^3$, e.g., $\mathbb{T}^2$ in \autoref{fig: geodesics}, visualizing the geodesics is not straightforward. For an illustrative example, consider the unit sphere $S^2 := \{x \in \mathbb{R}^3 \, | \, \|x\| = 1\}$ and two points $x,y \in S^2$, as in \autoref{fig: geodesics}. The geodesic $\gamma_g \in \mathcal{C}(x, y)$ can be interpreted geometrically as the (not necessarily unique) shortest arc in the unit sphere that connects $x$ with $y$, whose length, the geodesic distance in $S^2$ as defined in \eqref{eq: rieman_dis}, can be derived (see Appendix \ref{ap: s2}) as 
\begin{equation} \label{eq: dis_s2}
    d_{S^2}(x, y) = \arccos(x^\top y),
\end{equation}
where $\arccos(\cdot)$ takes values between $0$ and $\pi$.

 In $\text{SO}(3)$ the geodesic is not that straightforward to identify, but we can follow the same steps as in $S^2$ to find a geodesic distance. Let $\langle \Omega_1, \Omega_2 \rangle = \mathrm{tr}(\Omega_1^\top \Omega_2) =: \langle\langle \Omega_1, \Omega_2 \rangle\rangle$ be our Riemannian metric for $\text{SO}(3)$, being $\Omega_1, \Omega_2 \in \mathfrak{so}(3)$. Consider a constant angular velocity $\omega$ that \textit{connects} $R_a,R_b \in \text{SO}(3)$ along $\gamma_g \in \mathcal{C}(R_a, R_b)$, i.e., $\dot \gamma_g(t) = \omega$, such that $\gamma(0) = R_a$ and $\gamma(\Delta t) = R_b$. Then, computing the length of such a curve as in \eqref{eq: rieman_len} yields
\begin{align} \label{eq: len_so3}
    \int_{0}^{\Delta t} \sqrt{\langle \dot \gamma(t), \dot \gamma(t) \rangle_{\gamma(t)}} \, \mathrm{d}t \ &= \Delta t \sqrt{\mathrm{tr}({\omega^\wedge}^\top {\omega^\wedge})} \nonumber \\
    &= \Delta t \|\Omega\|_F = \|\tau^\wedge\|_F,    
\end{align}
where $\|.\|_F$ is the \textit{Frobenius norm} and $\tau^\wedge = \Omega \Delta t$ is the element of $T_{R_a}\text{SO}(3)$ that connects $R_a$ with $R_b$, as we will show in the following subsection. Indeed, let us note the following identity: since $\Omega = \omega^\wedge$ is a skew-symmetric matrix, then
\begin{equation} \label{eq: frob_product}
    \|\omega^\wedge\|_F = \sqrt{\mathrm{tr}({\omega^\wedge}^\top \omega^\wedge)} = \sqrt{2(\omega \cdot \omega)},
\end{equation}
where $\cdot$ denotes the standard scalar product between two vectors in the Euclidean space.

The length computed in \eqref{eq: len_so3} is the geodesic distance in $\text{SO}(3)$, expressed in terms of the integrated velocity vector $\tau$. Ideally, we would prefer to express this distance in terms of $R_a$ and $R_b$ directly, as we did in Example \ref{ex: line} for a flat manifold, but to do so we have to introduce first the maps that enable us to establish such a relationship between $\mathfrak{so}(3)$ and $\text{SO}(3)$.

\subsection{Exponential and logarithmic maps} \label{sec: exp_log}

The calculation of the velocity of $R$, as formulated in \eqref{eq: RdotOmega}, leads to an ordinary differential equation (ODE) whose solution is the signal
\begin{equation} \label{eq: exp_0}
    R(t) = R(0) \exp(\omega^\wedge t) = R(0) \exp(\tau^\wedge (t)),
\end{equation}
where we denote $\tau(t) = \omega t$ as the \textit{integrated rotation vector} or \textit{tangent vector}. Since $R(t), R(0) \in \text{SO}(3)$, the associative property of the group yields 
$$\exp(\tau^\wedge (t)) = R(0)^T R(t) \in \text{SO}(3).$$
Therefore, $\exp(\tau^\wedge (t))$ maps elements $\tau^\wedge (t)$ of the Lie algebra to the group. This is known as the \textit{exponential map}. Intuitively, as \autoref{fig: exp_log} illustrates for $S^1$, this map wraps the tangent vectors over the manifold following the length-minimizing geodesic from $x$ to $y$.

\begin{figure}[t]
    \centering
    \includegraphics[trim={2.5cm 1.4cm 1cm 2.5cm}, clip, width=\columnwidth]{./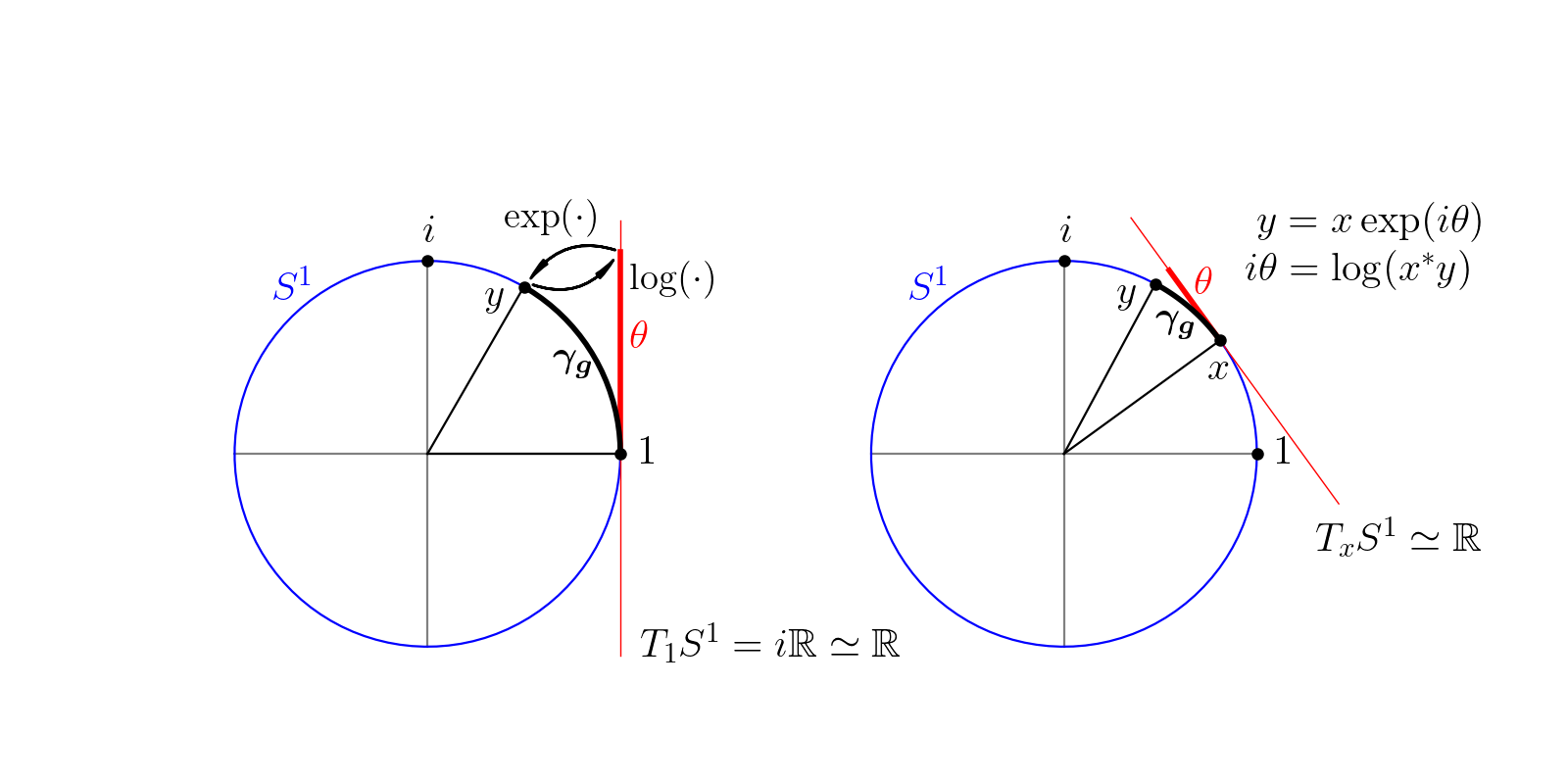}
    \caption{The $S^1$ manifold is a unit circle (blue) in $\mathbb{C}$, which contains every complex number $x$ such that $x^*x=1$. Its Lie algebra $T_1S^1$ is the line of imaginary numbers $i\mathbb{R}$ (red), and any tangent space $T_xS^1$ is isomorphic to the line $\mathbb{R}$ (red). Tangent vectors (red segments) wrap the manifold creating $\gamma_g$. The $\exp(\cdot)$ and $\log(\cdot)$ maps (arrows) wrap and unwrap elements of $\mathbb{R}$ to/from elements of $S^1$. This figure has been adapted from the reference \cite{so3_catalanes}.}
    \label{fig: exp_log}
\end{figure}

By taking the absolutely convergent Taylor series
\begin{equation} \label{eq: exp_taylor}
    \exp(\tau^\wedge) = \sum_{k=0}^\infty \frac{(\tau^\wedge)^k}{k!}
\end{equation}
and computing it as in \cite{so3_catalanes}, we formulate the $\exp(\cdot)$ map as
\begin{equation} \label{eq: exp}
     \exp(\tau^\wedge) = I + \frac{\sin\theta}{\theta} \tau^\wedge +\frac{1 - \cos\theta}{\theta^2} ({\tau^\wedge})^2,
\end{equation}
where $\theta = \|\tau\|$ is denoted as the \textit{integrated rotation angle}. This closed form of the exponential map is the well-known Rodrigues' formula for rotation matrices, as previously introduced in \eqref{eq: rod_exp}.

The inverse of $\exp(\cdot)$ is what we know as the \textit{logarithmic map} $\log(\cdot)$ and its closed form can be obtained as follows. For the sake of clarity, let $R(0) = I$, i.e., $\tau^\wedge \in \mathfrak{so}(3)$, so we directly have that $\exp(\tau^\wedge(t)) = R(t)$. Therefore, we can consider the skew-symmetric/symmetric decomposition of $R$ to regroup the terms in \eqref{eq: exp} as
\begin{equation}\label{eq: skew_symm}
    R = \underbrace{\frac{\sin\theta}{\theta} \tau^\wedge}_{\frac{R - R^T}{2}} + \underbrace{ I +  \frac{1 - \cos\theta}{\theta^2} ({\tau^\wedge})^2}_{\frac{R + R^T}{2}}.
\end{equation}
Thus, we have that $\frac{\sin\theta}{\theta} \tau^\wedge = \frac{R - R^T}{2}$, which directly yields
\begin{equation} \label{eq: log}
     \tau^\wedge = \log(R) =
     \begin{cases}
        \begin{array}{ll}
            0_{3 \times 3} & R = I\\
            \left( \frac{\theta}{2 \sin\theta}  \right) (R - R^\top ) & R \neq I\\
        \end{array}.
    \end{cases}
\end{equation}
Here, $\tau$ is unknown and we cannot compute $\theta$ from it directly, but note that taking the trace of $R$ in \eqref{eq: skew_symm} leads us to $\cos\theta = \frac{1}{2}(\mathrm{tr}(R) - 1)$, which restricting $\theta \in [0,\pi]$ gives the unique solution
\begin{equation} \label{eq: arccos}
    \theta = \arccos\left(\frac{\mathrm{tr}(R) - 1}{2}\right).
\end{equation}
At the boundary $\theta = \pi$, $R$ is purely symmetric, i.e., $R = R^T$, so the skew matrix provides no information, and thus, the solution of $\log(\cdot)$ is not unique. Therefore, we have a unique $\log(\cdot)$ for all $\tau$ within the interior ball of radius $\pi$, i.e., when $\|\tau\| < \pi$. Taking this into consideration, we formally define the exponential and logarithmic linear maps as
\begin{align*}
    \exp :& \; \Pi_{\mathfrak{so}(3)} \rightarrow \Pi_{\text{SO}(3)} \nonumber \\
    \log :& \; \Pi_{\text{SO}(3)} \rightarrow \Pi_{\mathfrak{so}(3)},
\end{align*}
whose closed formed are given by \eqref{eq: exp} and \eqref{eq: log}, respectively. Here, $\Pi_{\mathfrak{so}(3)} := \{\tau^\wedge \in \mathfrak{so}(3) \, : \, \tau \in \mathbb{R}^{3},\, \|\tau\| < \pi\}$ and $\Pi_{\text{SO}(3)} := \{R \in \text{SO}(3) \, : \, \mathrm{tr}(R) \neq -1\}$. 

When $\tau^\wedge \in T_{R_a} \text{SO}(3)$, we have that $R(0) = R_a$ in \eqref{eq: exp_0}, i.e., $\exp(\tau^\wedge(t)) = R_a^\top R(t)$. Computing the inverse of this expression, as we did for the trivial case of $R(0)=I$, leads us to $\tau^\wedge(t) = \log(R_a^\top R(t))$. This same scenario is illustrated for $S^1$ in \autoref{fig: exp_log} when $i\theta \in T_xS^1$. Therefore, revisiting \eqref{eq: len_so3}, we can now formalize the geodesic distance in $\text{SO}(3)$ in terms of $R_a$ and $R_b$ as
\begin{equation} \label{eq: metric}
    d_{\text{SO}(3)}(R_a, R_b) = \|\log(R_a^\top R_b)\|_F,
\end{equation}
where $R_a^\top R_b \in \Pi_{\text{SO}(3)}$, i.e., $\mathrm{tr}(R_a^\top R_b) \neq -1$. Note that ensuring $\mathrm{tr}(R_a^\top R_b) \neq -1$ is not straightforward, but there are particular situations where it becomes easier, e.g., see the following technical result.

\begin{lemma} \label{lemm: tr_log}
    Given $R_a(t), R_b(t) \subset \text{SO}(3)$ such that
    \begin{equation} \label{eq: dso3_exp}
        d_{\text{SO}(3)}(R_a(t), R_b(t)) \leq d_{\text{SO}(3)}(R_a(0), R_b(0)),
    \end{equation}
    where $t\in \mathbb{R}^+$. If $\mathrm{tr}(R_a(0)^\top R_b(0)) \neq -1$, then $\mathrm{tr}(R_a(t)^\top R_b(t)) \neq -1$ for all $t$.
\end{lemma}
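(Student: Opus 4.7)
The plan is to reparametrise everything in terms of the rotation angle $\theta(t) \in [0,\pi]$ defined via $\cos\theta(t) = (\mathrm{tr}(R_a(t)^\top R_b(t)) - 1)/2$, as in \eqref{eq: arccos}. Under this identification the condition $\mathrm{tr}(R_a(t)^\top R_b(t)) = -1$ is equivalent to $\theta(t) = \pi$, so it suffices to rule out $\theta(t) = \pi$ for $t > 0$.

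First, I would rewrite the geodesic distance \eqref{eq: metric} directly in terms of $\theta$. Applying the closed form of $\log(\cdot)$ in \eqref{eq: log} gives $\log(R_a^\top R_b) = \tau^\wedge$ with $\|\tau\| = \theta$, and then the Frobenius identity \eqref{eq: frob_product} yields $d_{\text{SO}(3)}(R_a,R_b) = \sqrt{2}\,\theta$ whenever $\theta < \pi$. Because $\text{SO}(3)$ is a complete Riemannian manifold, this distance extends continuously to all of $\text{SO}(3)\times \text{SO}(3)$, taking the value $\sqrt{2}\pi$ at $\theta = \pi$, which is the maximal geodesic separation on the group.

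With this reduction the lemma becomes almost immediate. The hypothesis $\mathrm{tr}(R_a(0)^\top R_b(0)) \neq -1$ gives $\theta(0) < \pi$, hence $d_{\text{SO}(3)}(R_a(0),R_b(0)) = \sqrt{2}\,\theta(0) < \sqrt{2}\pi$. The monotonicity assumption \eqref{eq: dso3_exp} then forces $d_{\text{SO}(3)}(R_a(t),R_b(t)) \leq \sqrt{2}\,\theta(0) < \sqrt{2}\pi$ for every $t\geq 0$. Since $d_{\text{SO}(3)} = \sqrt{2}\pi$ precisely when $\theta = \pi$, the strict inequality forces $\theta(t) < \pi$, and consequently $\mathrm{tr}(R_a(t)^\top R_b(t)) = 2\cos\theta(t) + 1 > -1$ for all $t$.

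The main subtlety I anticipate is that the closed-form expression \eqref{eq: metric} is only defined on $\Pi_{\text{SO}(3)}$, i.e., exactly away from the locus we wish to exclude. I would handle this either by invoking the continuous extension of the intrinsic Riemannian distance to the full group, or, if one prefers to stay inside the chart, by a contradiction argument: let $t^*$ be the infimum of times at which $\theta$ reaches $\pi$; on $[0,t^*)$ the formula $d = \sqrt{2}\theta$ is valid, and letting $t \to t^{*-}$ gives $d_{\text{SO}(3)}(R_a(t),R_b(t)) \to \sqrt{2}\pi$, contradicting the bound $d(t)\leq d(0) < \sqrt{2}\pi$ supplied by \eqref{eq: dso3_exp}.
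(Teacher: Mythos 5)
Your proof is correct and follows essentially the same route as the paper's: identify $\mathrm{tr}(R_a^\top R_b) = -1$ with the maximal geodesic separation and observe that the monotonicity hypothesis \eqref{eq: dso3_exp} keeps the distance strictly below that maximum. The paper's version is terser; your explicit computation $d_{\text{SO}(3)} = \sqrt{2}\,\theta$ and the limiting/contradiction argument at the boundary of the chart $\Pi_{\text{SO}(3)}$ simply make the same idea rigorous.
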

\begin{proof}
    From \eqref{eq: dso3_exp} we have that $d_{\text{SO}(3)}(R_a(t), R_b(t))$ is bounded by the value at $t=0$.  On the other hand, from \eqref{eq: arccos} we have that $R$ is the furthest element in $\text{SO}(3)$ from $I$ when $\mathrm{tr}(R) = -1$. Therefore, the distance between $R_a(t)$ and $R_b(t)$ will not ever be the maximum one; hence, the claim.
\end{proof}

\subsection{Computing the exponential and logarithmic maps}

In practice, numerous methods exist for computing these two maps, and its choice may depend on practical requirement details. Since in $\text{SO}(3)$ we have the Rodrigues' formula, the implementation of the exponential map reduces to computing $\theta = \|\tau\|$ and applying \eqref{eq: exp}. Nonetheless, when $\theta \rightarrow 0$, the denominators of the coefficients in \eqref{eq: exp} also approach zero, so the computation of $\exp(\cdot)$ may be compromised numerically. In such cases, since we are dealing with small angles, it is convenient to truncate the series \eqref{eq: exp_taylor} up to a suitable order.

When we are working with the logarithmic map, we also have to take special care about these singularities. Implementing $\log(\cdot)$ also reduces to compute $\theta$ as in \eqref{eq: arccos} and apply \eqref{eq: log}, but when $\theta \rightarrow 0$ we have the same numerical problem. In that case, it is convenient to use the Taylor's series expansion 
\begin{equation*}
\left( \frac{\theta}{\sin\theta}  \right) = 1 + \frac{1}{6}\theta^2 + \frac{7}{360}\theta^4 + \mathcal{O}(\theta^6).
\end{equation*}
On the other hand, we have also seen that $\log(\cdot)$ is not well defined when $\theta = \pi$, but we can computationally deal with that as follows. When $\theta = \pi$, the symmetric part of \eqref{eq: skew_symm} yields $R = I + 2(\tau^\wedge)^2$, which can be rewritten as
\begin{equation}
    \tau^\wedge {\tau^\wedge}^\top = \frac{1}{2}(R + I)
\end{equation}
by considering that $(\tau^\wedge)^2 = \tau^\wedge {\tau^\wedge}^\top - I$. 
Thus, we can still determine $\tau$ up to a factor of $\pm 1$.

The reader can explore an implementation example in Python for both the exponential and logarithmic maps in the GitHub repository \cite{repo_ss}. The method we have chosen in this project to compute the logarithmic map is similar to the approach presented in \cite{log_roll}.

\subsection{The adjoint map}


The relationship between $\Omega \in \mathfrak{so}(3)$ and its representation in $\mathcal{F}_E$ as in \eqref{eq: R_transform}, denoted as $\Omega^E$, can be seen as operating with a different basis. Indeed, knowing that $R^{-1} = R^T$, we have that
\begin{equation} \label{eq: r_omega_rt}
    \Omega^E = R \Omega R^{T},
\end{equation}
that is, we first change the basis from $\mathcal{F}_E$ to $\mathcal{F}_B$ with $R^T$, then we apply $\Omega$, and finally we \emph{come back} to $\mathcal{F}_E$ with $R$. Of course, we note that $\Omega^E \in \mathfrak{so}(3)$ as it comes from a linear transformation. In the context of Lie groups, this linear transformation is the well-known \emph{adjoint map}, defined as the linear map $\mathrm{Ad}_R : \mathfrak{so}(3) \rightarrow \mathfrak{so}(3)$ such that $\mathrm{Ad}_R(\Omega) = R \Omega R^T$. Indeed, taking the vee map of \eqref{eq: r_omega_rt} leads to \eqref{eq: R_transform} by definition, i.e.,
\begin{equation} \label{eq: adjoint_vee}
    {\mathrm{Ad}_R(\Omega)}^\vee = R \Omega^\vee = R\omega,
\end{equation}
since $(\Omega^E)^\vee = \omega^E$. This is a powerful identity of the adjoint map that we will extensively use later in our analysis.

Taking the derivative of the adjoint map at $R=I$ gives the \textit{adjoint operator} $\mathrm{ad}_\Omega : \mathfrak{so}(3) \rightarrow \mathfrak{so}(3)$. We show next in Example \ref{ex: lie_bracket} that the definition of this operator can be formulated as
\begin{equation} \label{eq: ad_op_def}
    \mathrm{ad}_\Omega = \frac{\mathrm{d}}{\mathrm{d}t} \mathrm{Ad}_{e^{t\Omega}}\bigg|_{t=0},
\end{equation}
where $e^{t\Omega} := \exp(t\Omega)$ is a length-minimizing geodesic curve, and that
\begin{equation} \label{eq: ad_op}
    \mathrm{ad}_{\Omega_a} (\Omega_b) = [\Omega_a, \Omega_b] = \Omega_a \Omega_b - \Omega_b \Omega_a,
\end{equation}
where $[.\,,\,.] : \mathfrak{so}(3) \times \mathfrak{so}(3) \rightarrow \mathfrak{so}(3)$ is the Lie bracket operator of the Lie algebra. Note that this expression directly leads us to $\mathrm{ad}_{\Omega_a}(\Omega_b) = 0$ when $\Omega_a = \Omega_b$. Moreover, if the Lie algebra is $\mathfrak{so}(3)$, then \eqref{eq: cross_identity} yields $[\omega^\wedge_a, \omega^\wedge_b] = (\omega_a \times \omega_b)^\wedge$. Finally, considering the Taylor expansion of the exponential, note that the adjoint map can be written as
\begin{align} \label{eq: Ad_series}
    \mathrm{Ad}_{\exp(\Omega_a)}(\Omega_b) &= \sum^\infty_{m=0}\frac{(\mathrm{ad}_{\Omega_a})^m}{m!}(\Omega_b) \nonumber\\
    &= \Omega_b + [\Omega_a,\Omega_b] + \frac{1}{2}[\Omega_a,[\Omega_a,\Omega_b]] + \text{H.O.T.}
\end{align}

\begin{example} \label{ex: lie_bracket}
Consider two signals $R_a(t), R_b(t) \in \text{SO}(3)$ such that $R_a(0) = R_b(0) = I$, $\dot R_a(0) = \Omega_a$ and $\dot R_b(0) = \Omega_b \in \mathfrak{so}(3)$; e.g., the length-minimizing geodesics $R_a(t) = \exp(t\Omega_a)$ and $R_b(t) = \exp(t\Omega_b)$. Let us choose an arbitrary $s\in \mathbb{R}^+$ and define the curve 
$C_s(t) = \mathrm{Ad}_{R_a(s)}(R_b(t)) = R_a(s) R_b(t) R_a(s)^\top.$ 
When $t=0$, note that $C_s(t)$ is contained in $\mathfrak{so}(3)$, and thus its derivative with respect to $t$ or $s$. Hence, the derivative at $t = s = 0$ yields
\begin{align} \label{eq: ex_lie_bracket}
    \scalemath{0.9}{
    \frac{\mathrm{d}}{\mathrm{d}s}}&
    \scalemath{0.9}{
    \left( R_a(s) \Omega_b R_a^\top(s) \right)\bigg|_{s=0} = \frac{\mathrm{d}}{\mathrm{d}s}\left( \exp({s\Omega_a}) \Omega_b {\exp({s\Omega_a})}^\top \right)\bigg|_{s=0}} \nonumber\\ 
    &= \dot R_a(0) \Omega_b R_a^\top(0) + R_a(0) \Omega_b \dot R_a^\top(0) \nonumber \\
    &= \Omega_a \Omega_b - \Omega_b \Omega_a \nonumber \\
    &= [\Omega_a, \Omega_b] \in \mathfrak{so}(3),
\end{align}
thus $[\Omega_a, \Omega_b]$ is an interesting operation of $\mathfrak{so}(3)$. Indeed, from \eqref{eq: ex_lie_bracket} one can directly derive \eqref{eq: ad_op_def} and \eqref{eq: ad_op}.

\end{example}

\subsection{Adjoint invariant metric}
We say that a Riemannian metric $\langle . , .\rangle$ is \textit{Ad-invariant} when
\begin{equation} \label{eq: ad-inv}
    \langle \mathrm{Ad}_R(\Omega_1) , \mathrm{Ad}_R(\Omega_2) \rangle = \langle \Omega_1 , \Omega_2 \rangle.
\end{equation}
For example, considering the metric $\langle\langle \Omega_1, \Omega_2 \rangle\rangle = \mathrm{tr}(\Omega_1^\top \Omega_2)$ we have that
\begin{align*}
    \langle\langle \mathrm{Ad}_R(\Omega_1) , \mathrm{Ad}_R(\Omega_2) \rangle\rangle &= \mathrm{tr}(R\Omega_1^TR^T R \Omega_2 R^T) \\
    &= \mathrm{tr}(\Omega_1^T \Omega_2) = \langle\langle \Omega_1 , \Omega_2 \rangle\rangle.
\end{align*}
The metrics with this property will be particularly interesting in our analysis because they lead to the following identity.

\begin{lemma} \label{lem: ad_inv}
    \cite[Chapter 17]{CIS610_notes} Considering a Riemannian metric $\langle . , .\rangle$ of a smooth manifold $M$, if it is Ad-invariant then
    \begin{equation}
    \langle \mathrm{ad}_X(Y) , Z \rangle = - \langle Y , \mathrm{ad}_X(Z) \rangle,
    \end{equation}
    where $X,Y,Z \in T_x M$, $x\in M$.
\end{lemma}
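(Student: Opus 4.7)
The plan is to exploit the very definition of the adjoint operator as the derivative of the adjoint map along a one-parameter subgroup, and to reduce the claim to simply differentiating the Ad-invariance identity \eqref{eq: ad-inv}.

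First, I would instantiate \eqref{eq: ad-inv} along the geodesic generated by $X$, namely set $R = \exp(tX)$ and apply it to the two fixed tangent vectors $Y$ and $Z$. This yields the identity
\begin{equation*}
    \langle \mathrm{Ad}_{\exp(tX)}(Y), \mathrm{Ad}_{\exp(tX)}(Z) \rangle = \langle Y, Z \rangle
\end{equation*}
for all $t\in\mathbb{R}$. The key observation is that the right-hand side is constant in $t$, so its derivative at $t=0$ vanishes.

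Next I would differentiate the left-hand side with respect to $t$ at $t=0$. Since the inner product is bilinear, the product rule gives two terms, and each factor $\mathrm{Ad}_{\exp(tX)}$ reduces to the identity at $t=0$ while its derivative at $t=0$ is precisely $\mathrm{ad}_X$ by \eqref{eq: ad_op_def}. Therefore
\begin{equation*}
    0 = \langle \mathrm{ad}_X(Y), Z \rangle + \langle Y, \mathrm{ad}_X(Z) \rangle,
\end{equation*}
and rearranging yields the desired identity.

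The only subtle point — and in my view the main (mild) obstacle — is justifying that one may interchange the $t$-derivative with the inner product $\langle \cdot, \cdot \rangle_x$. This is legitimate because the metric is smoothly chosen on the tangent spaces and the curves $t\mapsto \mathrm{Ad}_{\exp(tX)}(Y)$ and $t\mapsto \mathrm{Ad}_{\exp(tX)}(Z)$ are smooth curves in the same vector space $\mathfrak{so}(3)$ (i.e., we are working at a single tangent space, the Lie algebra, so the bilinear form at that point is fixed in $t$). Once that is noted, the argument collapses to the clean two-line computation above.
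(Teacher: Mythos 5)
Your proof is correct and is essentially the same argument as the paper's: both differentiate the Ad-invariance identity \eqref{eq: ad-inv} along $R = e^{tX}$ at $t=0$ and invoke \eqref{eq: ad_op_def}. The only cosmetic difference is that you apply the product rule to the fully invariant expression $\langle \mathrm{Ad}_{e^{tX}}(Y), \mathrm{Ad}_{e^{tX}}(Z)\rangle$ and obtain both terms at once, whereas the paper first uses invariance to shift the adjoint action onto the second slot and then differentiates a single factor.
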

\begin{proof}
    Given the definition of adjoint operator as in \eqref{eq: ad_op_def}, we have that
    \begin{align*}
     \langle \mathrm{ad}_{X}(Y) , Z \rangle &= \langle \frac{\mathrm{d}}{\mathrm{d}t} \mathrm{Ad}_{e^{tX}}(Y)\bigg|_{t=0} , Z \rangle  \\ 
     &= \frac{\mathrm{d}}{\mathrm{d}t}\langle  \mathrm{Ad}_{e^{tX}}(Y) , Z \rangle \bigg|_{t=0}.
    \end{align*}
    Here, since $\langle . , . \rangle$ is Ad-invariant, we can apply \eqref{eq: ad-inv}, which yields
    \begin{align*}
    \langle  \mathrm{Ad}_{e^{tX}}(Y) , Z \rangle &= \langle \mathrm{Ad}_{e^{-tX}}(\mathrm{Ad}_{e^{tX}}(Y)) , \mathrm{Ad}_{e^{-tX}}(Z) \rangle\\
     &= \langle  Y , \mathrm{Ad}_{e^{-tX}}(Z) \rangle.
    \end{align*}
    Therefore, we can directly conclude that
    \begin{align*}
     \langle \mathrm{ad}_{X}(Y) , Z \rangle &= \frac{\mathrm{d}}{\mathrm{d}t}\langle  \mathrm{Ad}_{e^{tX}}(Y) , Z \rangle \bigg|_{t=0} \\ 
     &= \frac{\mathrm{d}}{\mathrm{d}t}\langle  Y , \mathrm{Ad}_{e^{-tX}}(Z) \rangle \bigg|_{t=0} \\
     &= \langle Y , -\mathrm{ad}_{X}(Z) \rangle = - \langle Y , \mathrm{ad}_{X}(Z) \rangle.
    \end{align*}
\end{proof}

\subsection{The Jacobian of the exponential map in $\text{SO}(3)$}


Given $R(t)$ as a smooth curve of $\text{SO}(3)$, we now want to discover the relationship between the time derivative of $\tau^\wedge(t) = \log(R(t))$ and the velocity tensor of $R(t)$ from $\mathcal{F}_B$, denoted as $\Omega = \dot R(t) R(t)^\top$. One may show as in \cite[Theorem 1]{bullo_murray_1995} that
\begin{equation} \label{eq: omega_tau_int}
    \Omega = \int_0^1 \mathrm{Ad}_{e^{-\lambda \tau^\wedge(t)}}(\dot\tau^\wedge) \, \mathrm{d}\lambda,
\end{equation}
which, looking at the identity \eqref{eq: adjoint_vee}, can be interpreted as the integration of $\dot\tau$ along the length-minimizing geodesic that connects $I$ with $\exp(-\tau^\wedge(t))$. Following the proof of \cite[Theorem 2]{bullo_murray_1995} we can compute the solution of the integral in \eqref{eq: omega_tau_int}, and it yields
\begin{equation} \label{eq: series_derv}
    \dot\tau^\wedge = \sum^\infty_{n=0} \frac{(-1)^n \mathrm{B}_n}{n!} \mathrm{ad}_{\tau^\wedge(t)}^n (\Omega),
\end{equation}
where $\{\mathrm{B}_n\}$ are the Bernoulli numbers. In fact, \cite[Lemma 3]{bullo_murray_1995} shows that in $\text{SO}(3)$ the matrix series \eqref{eq: series_derv} converges to
\begin{align} \label{eq: tau_dot}
    \dot\tau^\wedge &= \mathcal{B}_{\tau^\wedge(t)} (\Omega) \nonumber \\ 
    &= \Omega + \left(\frac{1}{2}\mathrm{ad}_{\tau^\wedge(t)} + \frac{1 - \alpha(\theta)}{\theta^2}\mathrm{ad}^2_{\tau^\wedge(t)} \right) (\Omega),
\end{align}
where $\alpha(\theta) := (\theta/2)\cot(\theta/2)$. This expression will be useful for us thanks to the following identity
\begin{equation}
   \Omega = \tau^\wedge(t) \implies \dot\tau^\wedge = \Omega, 
\end{equation}
since $\mathrm{ad}_{\Omega}(\Omega) = 0$.



\section{Exponential stability and control in $\text{SO}(3)$}




A function $f : \mathbb{R}^+ \rightarrow \mathbb{R}^n$ is said to \textit{converge} to $r_0 \in \mathbb{R}^n$ \textit{exponentially fast} if there exist constants $\alpha, k >0$ and a time $t_0 \in \mathbb{R}^+$, such that for all $t > t_0$
$$\|f(t) - r_0\| \leq \alpha \|f(t_0) - r_0\| e^{-k (t - t_0)}.$$
Hence, considering $f(t)$ as a curve $\gamma(t) \subset M$, we extend this concept of exponential convergence to curves within smooth manifolds as follows. 

\begin{defn}[Local exponential stability in trajectory tracking]
\label{def: exptt}
    Given the signal $\gamma(t) \in M$, we say that the tracking of the desired signal $\gamma_d(t) \in M$ is exponentially stable for the \textit{set of initial conditions} $\mathcal{U}_\epsilon(\gamma_d(t_0)) := \{ x \in M \,|\, d_M(x, \gamma_d(t_0)) \leq \epsilon\}$, where $\epsilon > 0$ and $t_0 \in \mathbb{R}^+$, if there exist constants $\alpha, k >0$ such that
    \begin{equation} \label{eq: exp_stability}
        d_M(\gamma(t), \gamma_d(t)) \leq \alpha d_M(\gamma(t_0), \gamma_d(t_0)) e^{-k(t-t_0)}
    \end{equation}
    for all $\gamma(t_0) \in \mathcal{U}_\epsilon(\gamma_d(t_0))$ and $t \geq t_0$.
\end{defn}

Therefore, from \eqref{eq: exp_stability} we have that $\lim_{t\rightarrow + \infty}\gamma(t) = \gamma_d(t)$; in other words, we can solve Problem \ref{prolem: main} by designing a control law such that, given a local set of initial conditions $\mathcal{U}_\epsilon(R_a(t_0)) \subset \text{SO}(3)$, the tracking of $R_a(t)$ is exponentially stable for all $t>t_0$. For the sake of clarity, we have denoted $R(p(t),t) = R(t)$.

Let us begin by defining an error signal. Given the metric \eqref{eq: metric} and a target attitude matrix
\begin{equation} \label{eq: Ra}
   R_a := \left[x_a \; y_a \; z_a\right] \in \text{SO}(3),
\end{equation}
where $\{x_a, y_a, z_a\}$ is an orthonormal basis of $\mathbb{R}^3$, we define the \textit{attitude error} as 
\begin{equation} \label{eq: mu_r}
   \mu_{R_e} := d_{\text{SO}(3)}(R_a, R) = \|\log(R_e)\|_F,
\end{equation}
where $R_e = R_a^\top  R \in \text{SO}(3)$ is denoted as the \emph{attitude error matrix}. Using $R_e$ in the control input of \eqref{eq: kin} yields the  following feedback system 

\begin{equation} \label{eq: kin_fb}
    \begin{cases}
        \begin{array}{l}
            \dot p = R v \\
            \dot R = R \Omega(R_e) \\
            \dot R_e = R_e \Omega_e
        \end{array},
    \end{cases}
\end{equation}
where $\Omega_e \in \mathfrak{so}(3)$ denotes the velocity tensor of the attitude error matrix, which may depend on the velocity tensor of the desired attitude matrix, defined as $\Omega_a := R_a^\top \dot R_a \in \mathfrak{so}(3)$. Note that by taking the time derivative of $R_e = R_a^\top  R$
\begin{align} \label{eq: R_e_dot}
    \dot R_e &= R_a^{\top}\dot R + \frac{\mathrm{d}}{\mathrm{dt}}(R_a^{\top})R \nonumber\\  
    &= R_a^{\top} \dot R - R_a^{\top} \dot R_a R_a^{\top} R \nonumber\\
    &= R_e(\Omega - R^{\top}\dot R_a R_a^{\top} R),
\end{align}
where we have considered that $\frac{\mathrm{d}}{\mathrm{dt}}(R^{\top}) = \frac{\mathrm{d}}{\mathrm{dt}}(R^{-1}) = R^{-1}\dot R R^{-1} = R^{\top}\dot R R^{\top}$. Thus, since $\dot R_e = R_e \Omega_e$, from \eqref{eq: R_e_dot} we can formulate the angular velocity tensor of $R_e$ as
\begin{align} \label{eq: omega_e}
    \Omega_e &= \Omega - R^{T}\dot R_a R_a^{T} R \nonumber\\
    &= \Omega - R_e^{T} \Omega_a R_e \nonumber\\
    &= \Omega - \mathrm{Ad}_{R_e^{T}}(\Omega_a),
\end{align}
which by taking the vee map while considering \eqref{eq: adjoint_vee} yields
\begin{align} \label{eq: omega_e_vee}
     \Omega_e^\vee = \Omega^\vee - R_e^{T}\Omega_a^\vee,
\end{align}
therefore, we can express the velocity tensor of $R_e$ as a vector in $\mathbb{R}^3$ by taking the difference between $\Omega^\vee$ and $R_e^\top \Omega_a^\vee$.

Once we have introduced our error signal $\mu_{R_e(t)}$ and the feedback system \eqref{eq: R_e_dot}, we are in place to formalize a control law that will allow a robot modeled by \eqref{eq: kin} to converge to a desired time-varying target attitude $R_a(t)$.

\begin{prop}[3D attitude controller] \label{prop: att_control}
    Consider the attitude target $R_a(t)$ and the feedback system \eqref{eq: kin_fb}, for any initial condition $R(t_0)$ such that $\mathrm{tr}(R_e(t_0)) \neq -1 $, the tracking of $R_a(t)$ is exponentially stable under the control law
    \begin{align} \label{eq: omega_control}
        \Omega(R_e) &= - k_w \log(R_a^{T}R) + R^{T} \dot R_a R_a^{T} R \nonumber \\
        &= - k_w \log(R_e) + \mathrm{Ad}_{R_e^\top } (\Omega_a),
    \end{align}
    where $k_w \in \mathbb{R}^+$ is a positive constant that modulates the aggressiveness of the proportional controller.
\end{prop}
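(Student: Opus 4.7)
The plan is to substitute the proposed control law into the error dynamics \eqref{eq: omega_e}, reduce the evolution of the logarithmic coordinate $\tau_e^\wedge := \log(R_e)$ to a trivial linear ODE on the Lie algebra, and finally invoke Lemma \ref{lemm: tr_log} to ensure that the logarithm remains well-defined for all $t \geq t_0$ so that the resulting exponential decay persists globally.

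First, I would substitute \eqref{eq: omega_control} into \eqref{eq: omega_e}. The feedforward term $\mathrm{Ad}_{R_e^\top}(\Omega_a)$ cancels exactly the drift contribution coming from the moving target, leaving the strikingly clean closed-loop error relation $\Omega_e = -k_w \log(R_e) = -k_w \tau_e^\wedge$. Next, I would apply the Jacobian identity \eqref{eq: tau_dot} to express $\dot\tau_e^\wedge$ in terms of $\Omega_e$. Since $\Omega_e$ is a scalar multiple of $\tau_e^\wedge$ itself, every bracket $\mathrm{ad}_{\tau_e^\wedge}^k(\Omega_e)$ vanishes identically, and the series \eqref{eq: tau_dot} collapses to the scalar-like linear system
\begin{equation*}
\dot\tau_e^\wedge = \Omega_e = -k_w \tau_e^\wedge,
\end{equation*}
whose unique solution is $\tau_e^\wedge(t) = e^{-k_w(t-t_0)}\tau_e^\wedge(t_0)$. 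Taking the Frobenius norm yields $\mu_{R_e}(t) = e^{-k_w(t-t_0)}\mu_{R_e}(t_0)$, which matches Definition \ref{def: exptt} with constants $\alpha = 1$ and $k = k_w$.

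The main obstacle, and the only real subtlety, is the well-posedness of $\log(\cdot)$ along the trajectory: both the control law and the closed-loop identity above are only meaningful while $R_e(t) \in \Pi_{\text{SO}(3)}$, i.e., $\mathrm{tr}(R_e(t)) \neq -1$. To avoid circularity, I would work on the maximal interval $[t_0, T)$ on which $R_e$ stays in $\Pi_{\text{SO}(3)}$; on that interval the argument above establishes the monotone decay $d_{\text{SO}(3)}(R_a(t), R(t)) \leq d_{\text{SO}(3)}(R_a(t_0), R(t_0))$. Lemma \ref{lemm: tr_log} then forbids $R_e(t)$ from ever reaching the cut locus $\{\mathrm{tr}(\cdot) = -1\}$, forcing $T = +\infty$ and closing the argument. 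The local neighborhood $\mathcal{U}_\epsilon(R_a(t_0))$ required by Definition \ref{def: exptt} is recovered by choosing any $\epsilon < \pi\sqrt{2}$, since by \eqref{eq: frob_product} the Frobenius bound $\|\log(R_e)\|_F < \pi\sqrt{2}$ is exactly equivalent to the hypothesis $\mathrm{tr}(R_e(t_0)) \neq -1$.
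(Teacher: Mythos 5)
Your proposal is correct, and it reaches the conclusion by a genuinely different route than the paper. The paper runs a Lyapunov argument: it sets $V(R_e)=\tfrac{1}{2}\|\log(R_e)\|_F^2$, differentiates using the Jacobian identity \eqref{eq: tau_dot}, and then invokes the Ad-invariance result (Lemma \ref{lem: ad_inv}) to show that the bracket terms of $\mathcal{B}_{\log(R_e)}(\Omega_e)$ have zero inner product with $\log(R_e)$ for \emph{arbitrary} $\Omega_e$, so that $\dot V=\mathrm{tr}(\log(R_e)^\top\Omega_e)=-2k_wV$. You instead substitute the closed-loop relation $\Omega_e=-k_w\log(R_e)$ \emph{before} differentiating and observe that $\mathrm{ad}_{\tau_e^\wedge}$ annihilates scalar multiples of $\tau_e^\wedge$, so the whole series \eqref{eq: tau_dot} collapses pointwise and $\dot\tau_e^\wedge=-k_w\tau_e^\wedge$ exactly. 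This buys you something strictly stronger than the paper's conclusion: the full closed-form solution $\tau_e^\wedge(t)=e^{-k_w(t-t_0)}\tau_e^\wedge(t_0)$, i.e., the rotation axis of the error is frozen and only the angle decays, with no need for the Ad-invariant-metric machinery of Lemma \ref{lem: ad_inv}. The paper's Lyapunov route is more general (it would survive perturbations of $\Omega_e$ that are not collinear with $\log(R_e)$, which is exactly what is needed later in Lemma \ref{lem: exp_stab}), but for this specific proposition your direct integration is cleaner. Your handling of well-posedness --- working on the maximal interval where $R_e\in\Pi_{\text{SO}(3)}$, establishing monotone decay there, and then using Lemma \ref{lemm: tr_log} to rule out reaching the cut locus --- is also more explicit about the potential circularity than the paper's one-line appeal to that lemma, and your decay constant $\mu_{R_e}(t)=e^{-k_w(t-t_0)}\mu_{R_e}(t_0)$ is in fact the correct one (the paper's stated $\tfrac{1}{\sqrt{2}}\mu_{R_e}(t_0)e^{-k_wt}$ contains a spurious factor; $V=\tfrac{1}{2}\mu^2$ decaying as $e^{-2k_wt}$ gives $\mu$ decaying as $e^{-k_wt}$ with unit prefactor).
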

\begin{proof}
    Given the Lyapunov function $V(R_e) = \mu_{R_e}^2/2$, we have that
    \begin{equation} \label{eq: V}
        V(R_e) = \frac{1}{2} \|\log(R_e)\|^2 = \frac{1}{2} \mathrm{tr}(\log(R_e)^\top \log(R_e)),
    \end{equation}
    which is well-defined for all $R_e$ if $\mathrm{tr}(R_e) \neq -1$. Taking the time derivative of this expression leads us to
    \begin{align} \label{eq: V_dot_B}
        \dot V(R_e) &= \mathrm{tr}(\log(R_e)^\top \frac{\mathrm{d}}{\mathrm{dt}}\log(R_e)) \nonumber\\
        &= \mathrm{tr}(\log(R_e)^\top \mathcal{B}_{\log(R)}(\Omega_e)) \nonumber \\
        &= \mathrm{tr}(\log(R_e)^\top  \Omega_e),
    \end{align}
    where we have used Lemma \ref{lem: ad_inv} so that
    \begin{align*}
        \langle\langle \mathrm{ad}_{\log(R)}(\Omega) , \log(R) \rangle\rangle &= - \langle\langle \Omega , \mathrm{ad}_{\log(R)}(\log(R)) \rangle\rangle\\
        \langle\langle \mathrm{ad}^2_{\log(R)}(\Omega) , \log(R) \rangle\rangle &= \langle\langle \Omega , \mathrm{ad}^2_{\log(R)}(\log(R)) \rangle\rangle,
    \end{align*}
    and the fact that $\mathrm{ad}_{\log(R)}(\log(R)) = 0$. Therefore, if we let
    \begin{align} \label{eq: omega_e_kw}
        \Omega_e = - k_w \log(R_e),
    \end{align}
    then we have that $\dot V(R_e) = - 2 k_w V(R_e)$. The solution of this ODE is $V(R_e(t)) = V(R_e(t_0)) e^{-2 k_w t}$, which is equivalent to say that $\mu_{R_e}(t) = \frac{1}{\sqrt{2}} \mu_{R_e}(t_0) e^{-k_w t}$. Hence, given the Definition \ref{def: exptt}, we have that the tracking of $R_a(t)$ is exponentially stable if $\mathrm{tr}(R_e(t_0)) \neq -1$ because of Lemma \ref{lemm: tr_log}. 
\end{proof}

In fact, this proposition enables us to present the following formal solution for Problem  \ref{prolem: main}.

\begin{theorem} \label{thm: 1}
    Consider the attitude target $R_a(p(t),t) = R_a(t)$ such that $\dot R_a(t)$ is known. If there exist a time $t_0 \in \mathbb{R}^+$ such that $\mathrm{tr}(R_e(t_0)) \neq -1$, then taking $\Omega(t)$ as in \eqref{eq: omega_control} solves Problem \ref{prolem: main} for every $t \geq t_0$.
\end{theorem}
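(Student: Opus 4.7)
The plan is to invoke Proposition \ref{prop: att_control} directly and observe that its exponential stability conclusion is exactly what Problem \ref{prolem: main} requires. Problem \ref{prolem: main} asks for an actuation $\Omega(t)$ driving $R(t)\to R_a(p(t),t)$ as $t\to\infty$. In the Riemannian setting of $\text{SO}(3)$, convergence of $R(t)$ to $R_a(t)$ is captured precisely by the error signal $\mu_{R_e}(t)=d_{\text{SO}(3)}(R_a(t),R(t))\to 0$ defined in \eqref{eq: mu_r}, so it suffices to exhibit a $\Omega(t)$ under which this happens.

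First I would check that the proposed control law \eqref{eq: omega_control} is well-defined on $[t_0,\infty)$: both occurrences of $\log(R_e)$ require $R_e(t)\in\Pi_{\text{SO}(3)}$, i.e., $\mathrm{tr}(R_e(t))\neq -1$. The hypothesis of the theorem guarantees this at $t=t_0$, and Lemma \ref{lemm: tr_log} together with the non-increasing Riemannian distance produced by the closed loop extends this to all $t\geq t_0$. In particular, the hypotheses of Proposition \ref{prop: att_control} are satisfied at $t_0$.

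Next I would apply Proposition \ref{prop: att_control} to the feedback system \eqref{eq: kin_fb} with $\Omega$ given by \eqref{eq: omega_control}. Its proof yields the explicit decay law
\begin{equation*}
    \mu_{R_e}(t)=\tfrac{1}{\sqrt{2}}\,\mu_{R_e}(t_0)\,e^{-k_w(t-t_0)},
\end{equation*}
which fits Definition \ref{def: exptt} of local exponential stability in trajectory tracking with $\alpha=1/\sqrt{2}$ and $k=k_w$. Taking the limit $t\to\infty$ gives $\mu_{R_e}(t)\to 0$, and since $\mu_{R_e}=d_{\text{SO}(3)}(R_a,R)$ is the Riemannian distance on $\text{SO}(3)$, this is equivalent to $R(t)\to R_a(p(t),t)$ as $t\to\infty$, which is exactly what Problem \ref{prolem: main} demands.

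There is no serious obstacle: the whole content of the theorem is already embedded in Proposition \ref{prop: att_control}, so the only care needed is bookkeeping that (i) the admissibility condition $\mathrm{tr}(R_e(t_0))\neq -1$ holds initially and propagates via Lemma \ref{lemm: tr_log}, and (ii) exponential decay of the Riemannian distance is the correct notion of ``$R(t)\to R_a(t)$'' in $\text{SO}(3)$. Thus the theorem follows as a corollary of Proposition \ref{prop: att_control}.
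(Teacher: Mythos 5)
Your proposal is correct and matches the paper's treatment: the paper gives no separate proof of Theorem~\ref{thm: 1}, presenting it as an immediate consequence of Proposition~\ref{prop: att_control}, which is exactly the reduction you carry out (with the same bookkeeping via Lemma~\ref{lemm: tr_log} and Definition~\ref{def: exptt}).
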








\section{Practical problems} 

\subsection{Aligning to a vector field with an unknown time variation}

Given a, possibly time-varying, vector field $m_d(t) \in S^2$, whose time variation is unknown but bounded, i.e., $\|\dot m_d(t)\| \leq \omega_d$, where $\omega_d \in \mathbb{R}^+$, our objective now is to align the heading direction of our robot, i.e., $\hat {\dot p} := \dot p/\|\dot p\|$, with $m_d(t)$. Considering Remark \ref{rem: v} and $R(t) = \begin{bmatrix}x & y & z \end{bmatrix}$ with $x,y,z$ as defined in \eqref{eq: R}, we have that $\hat {\dot p}^\top x = 1$, and $\hat {\dot p}^\top y = \hat {\dot p}^\top z = 0$. Thus, this problem is equivalent for us to minimize the geodesic distance \eqref{eq: dis_s2} between $x(t)$ and $m_d(t)$. Indeed, we define the \textit{target attitude set} as follows.

\begin{defn} [Target attitude set]
    Consider $x(t)\in S^2, R_a(t) \in SO(3)$ with $R_a(t) = \begin{bmatrix}m_d(t) & * & *\end{bmatrix}$ and a positive constant $\delta \in [0,\pi]$, the target attitude set is defined as
    \begin{equation*}
        \mathcal{M}_\delta := \{R \,|\, \arccos(x(t)^\top m_d(t)) \leq \delta\}.
    \end{equation*}
\end{defn}
Note that we leave the desired attitude in the YZ axis up to the controller or designer. We now formally introduce the problem of this subsection as follows.



\begin{problem} \label{prolem: omega_unk}
    Consider a robot modeled by \eqref{eq: kin} and a target attitude set $\mathcal{M}_{\delta^*}$ such that $\|\dot m_d(t) \| \leq \omega_d$. Then, design a control signal $\Omega(t)$ such that, for a given $\delta^* \in [0,\pi]$ then $R(t) \to \mathcal{M}_{\delta^*}$ as $t\to\infty$.
\end{problem}




Firstly, let us note that, considering $R_a(t)$ as in \eqref{eq: Ra}, with $x_a(t) = m_d(t)$, we can split the attitude target velocity into

\begin{equation} \label{eq: omega_a_unk}
    \Omega_a = \Omega_a^u + \Omega_a^k,
\end{equation}
where $\Omega_a^u$ is an unknown variation of $R_a(t)$ associated to the time variation of $m_d (t)$, i.e., $\|{\Omega_a^u}^\vee\| \leq \omega_d$, and $\Omega_a^k$ can be configured through the design of the, possibly time-varying but known, vectors $y_a(t)$ and $z_a(t)$. Since we do not know $\Omega_a^u$, we reformulate the control law \eqref{eq: omega_control} as
\begin{equation} \label{eq: omega_control_know}
    \Omega(R_e) = - k_\omega \log(R_e) + \mathrm{Ad}_{R_e^\top} (\Omega_a^k),
\end{equation}
which leads us to the following technical result.




\begin{lemma} \label{lem: exp_stab}
 Given the feedback system \eqref{eq: kin_fb}, an attitude target $R_a(t)$, whose velocity $\Omega_a$ is given by \eqref{eq: omega_a_unk}, and a constant angular velocity $\omega_d \in \mathbb{R}^+$ such that $\|{\Omega_a^u}^\vee\| < \omega_d$. If $\mathrm{tr}(R_e(t_0)) \neq -1$, then the control law in \eqref{eq: omega_control_know} makes $R(t) \to M_{\delta^*}$ exponentially fast as $t\to\infty$ with $k_\omega = \sqrt{2}\frac{\omega_d}{\mu^*_{R_e}}$, where $\mu^*_{R_e} \leq \delta^*$.
\end{lemma}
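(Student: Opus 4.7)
The plan is to mirror the Lyapunov argument of Proposition \ref{prop: att_control}, with the new ingredient being a residual disturbance that the feedforward term $\mathrm{Ad}_{R_e^\top}(\Omega_a^k)$ in \eqref{eq: omega_control_know} cannot cancel. First, I would substitute the proposed controller into \eqref{eq: omega_e}, using the splitting \eqref{eq: omega_a_unk}, to obtain the closed-loop error velocity
\begin{equation*}
   \Omega_e = -k_\omega \log(R_e) - \mathrm{Ad}_{R_e^\top}(\Omega_a^u).
\end{equation*}
Then, inserting this into the Lyapunov derivative \eqref{eq: V_dot_B} with $V(R_e) = \tfrac{1}{2}\|\log(R_e)\|_F^2$ produces
\begin{equation*}
    \dot V(R_e) = -2 k_\omega V(R_e) - \langle\langle \log(R_e), \mathrm{Ad}_{R_e^\top}(\Omega_a^u)\rangle\rangle,
\end{equation*}
so the analysis reduces to controlling the second, sign-indefinite term.

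The key step is to apply the Cauchy-Schwarz inequality in the Frobenius inner product $\langle\langle \cdot,\cdot\rangle\rangle$ together with the Ad-invariance of this metric already established before Lemma \ref{lem: ad_inv}:
\begin{equation*}
    \bigl|\langle\langle \log(R_e),\mathrm{Ad}_{R_e^\top}(\Omega_a^u)\rangle\rangle\bigr| \leq \|\log(R_e)\|_F \|\Omega_a^u\|_F = \mu_{R_e}\sqrt{2}\,\|{\Omega_a^u}^\vee\|,
\end{equation*}
where the last identity uses \eqref{eq: frob_product}. Since $\|{\Omega_a^u}^\vee\|<\omega_d$, and writing $\mu_{R_e}=\sqrt{2V}$, I would obtain the scalar differential inequality
\begin{equation*}
    \dot\mu_{R_e} \leq -k_\omega \mu_{R_e} + \sqrt{2}\,\omega_d.
\end{equation*}

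The comparison principle applied to this linear inequality yields
\begin{equation*}
   \mu_{R_e}(t) \leq \Bigl(\mu_{R_e}(t_0)-\tfrac{\sqrt{2}\,\omega_d}{k_\omega}\Bigr)e^{-k_\omega(t-t_0)} + \tfrac{\sqrt{2}\,\omega_d}{k_\omega},
\end{equation*}
and substituting the prescribed gain $k_\omega=\sqrt{2}\,\omega_d/\mu^*_{R_e}$ makes the residual equilibrium level exactly $\mu^*_{R_e}$, giving exponential convergence of $\mu_{R_e}$ to that ball. Finally, I would translate this $\text{SO}(3)$-convergence to set-convergence in $\mathcal{M}_{\delta^*}$ by noting that $\mu_{R_e}=\sqrt{2}\,\theta$ for the axis-angle rotation $\theta$ of $R_e$, and that the $S^2$-geodesic distance $\arccos(x^\top m_d)$ between the first columns of $R$ and $R_a$ is bounded by $\theta$; hence $R(t)\in\mathcal{M}_{\delta^*}$ in the limit provided $\mu^*_{R_e}\leq\delta^*$ (up to the $\sqrt{2}$ factor that is absorbed into the choice of $\mu^*_{R_e}$).

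The main obstacle will be ensuring that the error trajectory remains in the domain $\Pi_{\text{SO}(3)}$ of $\log(\cdot)$, so that $V$ stays well defined and the controller $-k_\omega\log(R_e)$ is continuous along the solution. For the case $\mu_{R_e}(t_0)\geq\mu^*_{R_e}$, the inequality gives monotone decrease and Lemma \ref{lemm: tr_log} keeps $\mathrm{tr}(R_e(t))\neq -1$; for the case $\mu_{R_e}(t_0)<\mu^*_{R_e}$ I would argue that the comparison bound still keeps $\mu_{R_e}(t)$ strictly below $\pi$ as long as $\mu^*_{R_e}$ (equivalently $\delta^*$) is chosen sufficiently smaller than $\pi$, which is exactly the design freedom built into the statement.
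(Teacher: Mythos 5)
Your proposal follows essentially the same route as the paper's proof: the same closed-loop error velocity $\Omega_e = -k_\omega\log(R_e) - \mathrm{Ad}_{R_e^\top}(\Omega_a^u)$, the same Lyapunov function proportional to $\mu_{R_e}^2$ with the disturbance term bounded by $\sqrt{2}\,\mu_{R_e}\,\omega_d$ (you via Cauchy--Schwarz in the Frobenius metric, the paper via the equivalent vee-form dot product), the same gain choice yielding exponential convergence to the ball of radius $\mu^*_{R_e}$, and the same final Rodrigues-type argument that the $S^2$ misalignment $\delta$ is bounded by the rotation angle of $R_e$. If anything, you are more careful than the paper in tracking the $\sqrt{2}$ factor between $\mu_{R_e}=\|\log(R_e)\|_F$ and the rotation angle $\theta$ (the paper's closing paragraph conflates them, harmlessly since $\theta\leq\mu_{R_e}$) and in justifying that $\log(R_e)$ remains well defined along the trajectory.
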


\begin{proof} Substituting the vee form of \eqref{eq: omega_a_unk} and \eqref{eq: omega_control_know} into \eqref{eq: omega_e_vee}, we obtain
\begin{align} \label{eq: omega_e_unk}
    \Omega_e^\vee &= \Omega^\vee - {R_e^\top}^\vee \Omega_a^\vee \nonumber \\
    &= - k_w \log(R_e)^\vee - {R_e^\top}^\vee {\Omega_a^u}^\vee.
\end{align}
Consider the Lyapunov function $V = \frac{1}{4}\mu_{R_e}^2$ and its time derivative, similarly as in \eqref{eq: V_dot_B} 
\begin{equation*}
    \frac{1}{4}\frac{\mathrm{d}}{\mathrm{dt}} \mu_{R_e}^2 =  \frac{1}{2}\mathrm{tr}\left(\log(R_e)^\top \Omega_e\right) =  \log(R_e)^\vee \cdot \Omega_e^\vee.
\end{equation*} 
Therefore, given $\Omega_e^\wedge$ as in \eqref{eq: omega_e_unk} and considering \eqref{eq: frob_product}, we have that
\begin{equation} \label{eq: to_max}
    \log(R_e)^\vee \cdot \Omega_e^\vee = - \frac{1}{2}k_w \mu_{R_e}^2 - \log(R_e)^\vee \cdot R_e^\top (\Omega_a^u)^\vee,
\end{equation}
where $\frac{1}{2}\mu_{R_e}^2 = \log(R_e)^\vee \cdot \log(R_e)^\vee$. We upper bound the second term of \eqref{eq: to_max} thanks to
$$\log(R_e)^\vee \cdot R_e^\top (\Omega_a^u)^\vee \leq \frac{\mu_{R_e}}{\sqrt{2}}\omega_d,$$
where $\|\log(R_e)^\vee\| = \frac{\mu_{R_e}}{\sqrt{2}}$ because of \eqref{eq: frob_product}; thus, the time derivative of $\mu_{R_e}^2$ can be bounded as
\begin{align*}
    \frac{1}{2}\frac{\mathrm{d}}{\mathrm{dt}} \mu_{R_e}^2 \leq \left(-k_\omega  + \sqrt{2}\frac{\omega_d}{\mu_{R_e}} \right) \mu_{R_e}^2.
\end{align*}
Therefore, if $\mathrm{tr}(R_e(t_0)) \neq -1$, as seen in Lemma \ref{lemm: tr_log}, and knowing that $\mu_{R_e}(t_0)>0$ because it is a distance, then $\mu_{R_e}(t)$ converges exponentially fast as $t\to\infty$ to the interval $[0, \mu_{R_e}^*]$ for $k_\omega = \sqrt{2}\frac{\omega_d}{\mu^*_{R_e}}$, where $\mu^*_{R_e} \leq \delta^*$.  Let us see why this selection of $\mu^*_{R_e}$ is sufficient. First, note that $\mu_{R_e}$ is the length-minimizing geodesic distance, so it is equal to the integrated rotation angle $\theta$, i.e., $\log(R_e)^\vee = \mu_{R_e} l$, where $l\in\mathbb{R}^3$ is the unit vector along the rotation axis. Next, let us decompose the axis $x$ into components parallel and perpendicular to the axis $l$,
\begin{equation*}
    x = x_\parallel + x_\perp = (x^\top l) l - l \times (l \times x).
\end{equation*}
Applying the rotation $\log(R_e)^\vee$ to this vector yields \cite{wikirodrigues}
\begin{align*}
    x_\text{rot} &= x_\parallel + x_{\perp \text{rot}} \\
    &= x_\parallel + \cos(\mu_{R_e}) x_\perp + \sin(\mu_{R_e})l \times v_\perp.
\end{align*}
Therefore, considering $x_\text{rot} = m_d$ we have that
\begin{align*}
    \cos(\delta) = x^\top m_d &= \|x_\parallel\|^2 + \cos(\mu_{R_e})\|x_\perp\|^2\\
    &= \cos(\mu_{R_e}) + \|x_\parallel\|^2(1 - \cos(\mu_{R_e})),
\end{align*}
so $\cos(\delta) \geq \cos(\mu_{R_e})$ and consequently $\delta \leq \mu_{R_e}$. Hence, $\mu^*_{R_e} \leq \delta^*$ ensures that $\delta \leq \delta^*$.


\end{proof}

Now we are ready to provide a solution to Problem \ref{prolem: omega_unk}.

\begin{theorem} \label{thm: 2}
    Given the feedback system \eqref{eq: kin_fb} and attitudes $R_a(t), R(t_0)$ meeting the conditions of Lemma \ref{lem: exp_stab}, then the control law \eqref{eq: omega_control_know} solves Problem \ref{prolem: omega_unk}.
\end{theorem}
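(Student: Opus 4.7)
The plan is to observe that Theorem \ref{thm: 2} is essentially a direct repackaging of Lemma \ref{lem: exp_stab} in the language of Problem \ref{prolem: omega_unk}, so the proof will be very short. First I would verify that the control law \eqref{eq: omega_control_know} is actually implementable under the standing hypotheses: the term $\log(R_e)$ requires only $R_e = R_a^\top R$, available from the robot's attitude measurement and the designed target, while $\mathrm{Ad}_{R_e^\top}(\Omega_a^k)$ involves only the known component of $\Omega_a$. This is consistent with the assumption that $\Omega_a^u$ is unknown but bounded, and this is really the only thing to check beyond what Lemma \ref{lem: exp_stab} already gives us.

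Next I would invoke Lemma \ref{lem: exp_stab} directly: since the hypotheses of Theorem \ref{thm: 2} (the feedback system \eqref{eq: kin_fb}, the structure of $R_a(t)$, and $\mathrm{tr}(R_e(t_0))\neq -1$) are precisely those of the lemma, selecting $k_\omega = \sqrt{2}\,\omega_d/\mu^*_{R_e}$ with any $\mu^*_{R_e}\leq \delta^*$ yields exponentially fast convergence of $\mu_{R_e}(t)$ into the interval $[0,\mu^*_{R_e}]$. Combined with the geometric inequality $\delta \leq \mu_{R_e}$ derived at the end of the proof of Lemma \ref{lem: exp_stab}, this gives $\arccos(x(t)^\top m_d(t)) \leq \delta^*$ in the limit, i.e.\ $R(t) \to \mathcal{M}_{\delta^*}$ as $t\to\infty$, which is precisely the conclusion required by Problem \ref{prolem: omega_unk}.

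I do not anticipate any real technical obstacle, since all the heavy lifting (the Lyapunov argument on $\mu_{R_e}^2$, the Cauchy--Schwarz bound on the disturbance term $\log(R_e)^\vee \cdot R_e^\top(\Omega_a^u)^\vee$, and the trigonometric reduction relating the geodesic error $\mu_{R_e}$ to the alignment error $\delta$) has already been completed inside Lemma \ref{lem: exp_stab}. The theorem statement merely observes that exponential convergence into $\mathcal{M}_{\delta^*}$ trivially implies the asymptotic convergence $R(t) \to \mathcal{M}_{\delta^*}$ that Problem \ref{prolem: omega_unk} demands, so the proof reduces to a one-line appeal to the lemma.
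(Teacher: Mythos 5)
Your proposal is correct and matches the paper's (implicit) argument: the paper states Theorem \ref{thm: 2} without a separate proof precisely because it is an immediate corollary of Lemma \ref{lem: exp_stab}, where all the analysis (the Lyapunov bound and the $\delta \leq \mu_{R_e}$ reduction) is carried out. Your one-line appeal to the lemma, plus the observation that exponential convergence into $[0,\mu^*_{R_e}]$ with $\mu^*_{R_e}\leq\delta^*$ gives $R(t)\to\mathcal{M}_{\delta^*}$, is exactly the intended reasoning.
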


\subsection{Distances between robots while they align with a common and fixed attitude}

Even though the following result looks cumbersome because we know the convergence rate to the desired fixed attitude, the result aims to show how to exploit certain norm properties.

\begin{lemma} \label{lem: pij}
    Given two robots $i, j$ with dynamics \eqref{eq: kin}, $||v|| =1$ and a desired attitude matrix $R_a \in SO(3)$, if $\Omega_{\{i,j\}}$ is designed as in \eqref{eq: omega_control} so that the rate of exponential convergence to $R_a$ is $r \in \mathbb{R}^+$, then
    \begin{equation*}
        \|p_{ij}(t) - p_{ij}(t_0)\| \leq \frac{2\sqrt{3}\pi}{r}.
    \end{equation*}
\end{lemma}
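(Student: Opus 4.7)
The plan is to bound the displacement by integrating the relative velocity $\dot p_{ij} = (R_i - R_j)v$ and controlling the integrand via the exponential decay of each robot's attitude error guaranteed by Proposition \ref{prop: att_control}.

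Concretely, I would first write
\begin{equation*}
p_{ij}(t) - p_{ij}(t_0) = \int_{t_0}^t \bigl(R_i(s)-R_j(s)\bigr)\,v\,\mathrm{d}s,
\end{equation*}
take norms, and use $\|v\|=1$ together with $\|Av\|\le\|A\|_F\|v\|$ to reduce the problem to bounding $\|R_i-R_j\|_F$. The triangle inequality through the common target $R_a$ gives $\|R_i-R_j\|_F \le \|R_i-R_a\|_F + \|R_j-R_a\|_F$. For each term, setting $R_{e,k}:=R_a^\top R_k$ and computing the singular values of $R_{e,k}-I$ yields $\|R_k-R_a\|_F = \|R_{e,k}-I\|_F = 2\sqrt{2}\,|\sin(\theta_k/2)|$, where $\theta_k$ is the rotation angle of $R_{e,k}$. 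Since $\mu_{R_{e,k}} = \|\log(R_{e,k})\|_F = \sqrt{2}\theta_k$, I would bound this above by $\sqrt{3/2}\,\mu_{R_{e,k}}$, obtained by combining the generic $3\times 3$ inequality $\|A\|_F\le\sqrt{3}\|A\|_2$ with $\|R_{e,k}-I\|_2 = 2|\sin(\theta_k/2)|\le \theta_k$.

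Next, I would invoke Proposition \ref{prop: att_control} to obtain $\mu_{R_{e,k}}(t)\le\mu_{R_{e,k}}(t_0)\,e^{-r(t-t_0)}$, and use the admissibility condition $\mathrm{tr}(R_{e,k}(t_0))\ne -1$ to conclude $\theta_k(t_0)<\pi$ and hence $\mu_{R_{e,k}}(t_0)<\sqrt{2}\pi$. Chaining these estimates gives $\|R_i(s)-R_j(s)\|_F \le 2\sqrt{3}\pi\,e^{-r(s-t_0)}$, and integrating over $[t_0,t]$ delivers
\begin{equation*}
\|p_{ij}(t)-p_{ij}(t_0)\| \le \int_{t_0}^t 2\sqrt{3}\pi\,e^{-r(s-t_0)}\,\mathrm{d}s \le \frac{2\sqrt{3}\pi}{r}.
\end{equation*}

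The main obstacle is pinning down the precise constant: the sharpest version of this chain of inequalities actually produces $2\sqrt{2}\pi/r$, since $R_{e,k}-I$ has rank at most two and therefore obeys the tighter relation $\|A\|_F\le\sqrt{2}\|A\|_2$. Reaching exactly the stated $2\sqrt{3}\pi/r$ requires deliberately using the looser generic $3\times 3$ bound, which I would flag as the one nontrivial modeling choice in the argument; all remaining steps are routine consequences of the triangle inequality in the Frobenius norm and the exponential convergence result.
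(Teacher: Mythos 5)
Your proof is correct and its skeleton matches the paper's: integrate $\dot p_{ij}=(R_i-R_j)v$, bound $\|R_i-R_j\|_F$ through the common target $R_a$ by a multiple of the attitude errors, invoke exponential decay, and integrate. The middle of the chain is genuinely different, though. The paper factors out $R_a$ via submultiplicativity, $\|R_i-R_j\|_F\le\|R_a\|_F\|R_e^i-R_e^j\|_F=\sqrt{3}\,\|R_e^i-R_e^j\|_F$, then cites the equivalence of the chordal and geodesic metrics to pass to $\|\log({R_e^j}^\top R_e^i)\|_F$ and applies the triangle inequality for the geodesic distance, finally bounding $\mu_{R_e^i}(t_0)+\mu_{R_e^j}(t_0)\le 2\pi$. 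You instead apply the Frobenius triangle inequality directly through $R_a$ and compute $\|R_{e,k}-I\|_F=2\sqrt{2}\,|\sin(\theta_k/2)|$ from the eigenvalues, which avoids the metric-equivalence citation and makes every constant explicit. Your bookkeeping is the more careful of the two: under the paper's own convention $\mu_{R_e}=\|\log(R_e)\|_F=\sqrt{2}\,\theta$, admissible initial errors satisfy $\mu_{R_e^k}(t_0)<\sqrt{2}\pi$, so the paper's chain read literally gives $2\sqrt{6}\pi/r$ (it tacitly uses $\mu_{R_e}\le\pi$, i.e.\ the $\mu=\theta$ convention from the proof of Lemma \ref{lem: exp_stab}), whereas your route reaches exactly $2\sqrt{3}\pi/r$ by the deliberately loose $\|A\|_F\le\sqrt{3}\|A\|_2$, as you flag. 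Your observation that the sharp constant is $2\sqrt{2}\pi/r$ is right, and in fact the paper's $\sqrt{3}$ is itself spurious: the Frobenius norm is invariant under left multiplication by the orthogonal $R_a$, so $\|R_i-R_j\|_F=\|R_e^i-R_e^j\|_F$ with constant $1$, which combined with $2\sqrt{2}\,|\sin(\theta/2)|\le\sqrt{2}\,\theta=\mu_{R_e}$ recovers your tighter bound. Both arguments validly establish the stated (non-tight) estimate.
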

    
\begin{proof}
    Given $\dot p_{ij} = (R_i - R_j)v$, the definition of norm for linear applications leads to
    \begin{equation} \label{eq: pij_rij}
        \|\dot p_{ij}\| \leq \|R_i - R_j\|_F\|v\| = \|R_i - R_j\|_F.
    \end{equation}
    If we left multiply $(R_i - R_j)$ by $R_a R_a^\top$, the expression \eqref{eq: pij_rij} yields
    \begin{align*}
        \|R_i - R_j\|_F &\leq \|R_a\|_F\|R_a^\top R_i - R_a^\top R_j\|_F \nonumber\\
        &= \sqrt{3}\|R_e^i - R_e^j\|_F
    \end{align*}
    Hence, since $\|R_i - R_j\|_F$ and \eqref{eq: metric} are bounded equivalent metrics \cite{so3_metrics}, we can consider
    
    \begin{equation*}
        \|R_e^i - R_e^j\|_F \leq \|\log({R_e^j}^\top R_e^i)\|_F,
    \end{equation*}
    which, by the triangle inequality, leads us to

   \begin{align*}
        \|\log({R_e^j}^\top R_e^i)\|_F &\leq \|\log(R_e^i)\|_F + \|\log(R_e^j)\|_F \nonumber\\
        &= \mu_{R^i_e} + \mu_{R^j_e}.
    \end{align*}

    Therefore, since the tracking of $R_a$ is exponentially stable with convergence rate $r$, we have that
    \begin{align*}
        \|\dot p_{ij}(t)\| &\leq \sqrt{3}\left(\mu_{R^i_e}(t_0) + \mu_{R^j_e}(t_0)\right) e^{- r t} \nonumber\\
        &\leq 2\sqrt{3}\pi \, e^{- r t},
    \end{align*}
    so that the relative position will be also stabilized exponentially fast. Indeed, we have that $p_{ij}(t) = p_{ij}(t_0) + \int_0^t  \dot p_{ij}(s) \mathrm{ds}$, so we can conclude that
    \begin{align*}
        \|p_{ij}(t) - p_{ij}(t_0)\| &= \left\|\int_0^t  \dot p_{ij}(s) \mathrm{ds}\right\| \leq \int_0^t  \|\dot p_{ij}(s)\| \mathrm{ds}\\ &\leq 2\sqrt{3}\pi \int_0^\infty  e^{- r s} \mathrm{ds} =  \frac{2\sqrt{3}\pi}{r}.
    \end{align*}
\end{proof}

\subsection{Distance between two robots while they align with a non-totally known vector field}
We finish this tutorial by showing how to compute the distance bound between two robots when they align with a time-varying vector field $m_d(t) \in \mathbb{R}^3$. However, the vector field is only known at time $t$ with $\left\|\dot m_d(t)\right\| \leq \omega_d \in\mathbb{R}^+$, so the robots cannot compensate their tracking with a feed-forward signal.

\begin{prop} \label{prop: local_exp_stability}
    Given two robots $i, j$ with dynamics \eqref{eq: kin}, $||v|| =1$ and a desired attitude matrix $R_a \in SO(3)$. If $\Omega_{\{i,j\}}$ is given by \eqref{eq: omega_control_know}, $R_{i,j}(t_0) \in \mathcal{M}_{\delta^*}$ with $\delta^*$ sufficiently small and $\mu_{R_e^{\{i,j\}}}^*=\mu_{R_e}^*$ is designed as in Lemma \ref{lem: exp_stab}, then 
    \begin{equation*}
        \|p_{ij}(t) - p_{ij}(t_0)\| \leq\frac{2\sqrt{3}\mu_{R_e}^*}{k_w}.
    \end{equation*}
\end{prop}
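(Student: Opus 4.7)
The plan is to mirror the proof of Lemma \ref{lem: pij} almost step-by-step, with Lemma \ref{lem: exp_stab} now playing the role that Proposition \ref{prop: att_control} played before. The chain of inequalities that converts $\|\dot p_{ij}(t)\|$ into a sum of individual attitude errors $\mu_{R_e^i} + \mu_{R_e^j}$ is unchanged; only the closing estimate is different, because the partially known control law \eqref{eq: omega_control_know} drives each error into the ball of radius $\mu^*_{R_e}$ rather than strictly to zero, and the hypothesis ``$\delta^*$ sufficiently small'' is what makes this substitution legitimate.

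First I would write $\dot p_{ij} = (R_i - R_j) v$ and, using $\|v\|=1$ together with submultiplicativity of the Frobenius norm, bound $\|\dot p_{ij}(t)\| \leq \|R_i - R_j\|_F$. Inserting $R_a R_a^\top = I$ and applying $\|R_a\|_F = \sqrt{3}$ yields $\|R_i - R_j\|_F \leq \sqrt{3}\,\|R_e^i - R_e^j\|_F$. Then the metric equivalence from \cite{so3_metrics} lets me pass to $\|R_e^i - R_e^j\|_F \leq \|\log({R_e^j}^\top R_e^i)\|_F$, and the triangle inequality for the Riemannian distance in $\text{SO}(3)$ closes the chain with $\|\log({R_e^j}^\top R_e^i)\|_F \leq \mu_{R_e^i} + \mu_{R_e^j}$.

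Next I would apply Lemma \ref{lem: exp_stab} to each robot. Since $R_{i,j}(t_0) \in \mathcal{M}_{\delta^*}$ with $\delta^*$ sufficiently small, the initial attitude errors $\mu_{R_e^{\{i,j\}}}(t_0)$ can be taken to lie inside the target ball $[0,\mu^*_{R_e}]$, and with the gain $k_w = \sqrt{2}\,\omega_d/\mu^*_{R_e}$ prescribed by that lemma the Lyapunov argument contracts each $\mu_{R_e^{\{i,j\}}}(t)$ at rate $k_w$. Plugging the resulting bound $\mu_{R_e^{\{i,j\}}}(t) \leq \mu^*_{R_e}\,e^{-k_w(t-t_0)}$ into the previous chain gives $\|\dot p_{ij}(t)\| \leq 2\sqrt{3}\,\mu^*_{R_e}\,e^{-k_w(t-t_0)}$, and integrating on $[t_0,\infty)$ delivers the stated constant $2\sqrt{3}\,\mu^*_{R_e}/k_w$.

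The main obstacle is rigorously justifying the exponential bound $\mu_{R_e^{\{i,j\}}}(t) \leq \mu^*_{R_e}\,e^{-k_w(t-t_0)}$, because Lemma \ref{lem: exp_stab} literally only guarantees convergence \emph{into} the ball of radius $\mu^*_{R_e}$, so its steady-state residue would make the naive integrand non-vanishing and the integral divergent. The ``$\delta^*$ sufficiently small'' clause is what rescues the argument: it enables a local linearization of the $R_e$-dynamics around $I$ in which the unknown common-mode disturbance $\Omega_a^u$ acts identically on both robots and therefore cancels from the relative error $R_e^i - R_e^j$, leaving a pure exponential contraction at rate $k_w$ and thus reconciling the integrand with the exponential envelope needed to produce the advertised bound.
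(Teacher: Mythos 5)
Your proposal contains a genuine gap, and to your credit you identify it yourself in the last paragraph: the chain of inequalities you set up bounds $\|\dot p_{ij}(t)\|$ by $\sqrt{3}\left(\mu_{R_e^i}(t) + \mu_{R_e^j}(t)\right)$ \emph{at time $t$}, and then requires each individual error to obey $\mu_{R_e^{\{i,j\}}}(t) \leq \mu^*_{R_e}\,e^{-k_w(t-t_0)}$. That bound is false: Lemma \ref{lem: exp_stab} only drives each error into the ball $[0,\mu^*_{R_e}]$, where the unknown disturbance $\Omega_a^u$ can sustain a nonzero residual indefinitely, so the integrand need not vanish and the integral over $[t_0,\infty)$ need not converge. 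Your proposed rescue --- that the common disturbance cancels from the \emph{relative} error --- is the right idea, but it repairs a different quantity than the one your chain actually uses: knowing that $R_e^i - R_e^j \to 0$ does not let you conclude that $\mu_{R_e^i}(t) + \mu_{R_e^j}(t) \to 0$, since both errors can hover at the same nonzero offset. As written, the argument does not close.

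The paper's proof restructures around the relative error from the outset. It defines $R_{ij} := R_j^\top R_i$ (which equals $R_e^{ij} = (R_a^\top R_j)^\top R_a^\top R_i$, so $R_a$ and hence $\Omega_a^u$ never enter its dynamics), derives $\dot R_{ij} = R_{ij}\Omega_{ij}$ with $\Omega_{ij} = \Omega_i - \mathrm{Ad}_{R_{ij}^\top}(\Omega_j)$, and observes that the known feed-forward terms of \eqref{eq: omega_control_know} cancel \emph{exactly}, leaving $\Omega_{ij} = -k_w\left(\log(R_e^i) - \mathrm{Ad}_{R_{ij}^\top}(\log(R_e^j))\right)$. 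The role of ``$\delta^*$ sufficiently small'' is then to justify, via the series \eqref{eq: Ad_series} and the Baker--Campbell--Hausdorff formula, that $\Omega_{ij} = -k_w\log(R_e^{ij}) + \text{H.O.T.}$, so the Lyapunov function $V(R_e^{ij}) = \tfrac{1}{2}\mu_{R_e^{ij}}^2$ satisfies $\dot V = -k_w V + \text{H.O.T.}$ and $\mu_{R_e^{ij}}(t)$ decays exponentially to zero. Only at that point does the triangle inequality from Lemma \ref{lem: pij} enter, and only to bound the \emph{initial} condition: $\mu_{R_e^{ij}}(t_0) \leq \mu_{R_e^i}(t_0) + \mu_{R_e^j}(t_0) \leq 2\mu^*_{R_e}$. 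With $\|\dot p_{ij}(t)\| \leq \sqrt{3}\,\mu_{R_e^{ij}}(t_0)e^{-k_w t}$ the integration then goes through. To fix your proof you would need to move the triangle inequality from time $t$ to time $t_0$ and carry out the relative-dynamics contraction argument you only sketch in your final paragraph.
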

\begin{proof}
    Given $R_{ij} := R_j^\top R_i$, taking its time derivative yields
    \begin{align*}
        \dot R_{ij} = R_j^\top \dot R_i + \dot R^\top R_i 
        &= R_{ij}\Omega_i - \Omega_j R_{ij} \\
        &= R_{ij} (\Omega_i - Ad_{R_{ij}^\top}(\Omega_j)) =  R_{ij}\Omega_{ij},
    \end{align*}
    and substituting $\Omega_{i}$ and $\Omega_{j}$ by the control law in \eqref{eq: omega_control_know} leads us to
    \begin{align} \label{eq: omega_ij}
        \Omega_{ij} &= -k_w\left(\log(R_e^i) - Ad_{R_{ij}^\top}(\log(R_e^j))\right) \nonumber\\
        &+ Ad_{R_i^\top R_a}(\Omega_a^k) - Ad_{R_{ij}^\top R_j^\top R_a}(\Omega_a^k)\nonumber\\
        &= -k_w\left(\log(R_e^i) - Ad_{R_{ij}^\top}(\log(R_e^j))\right),
    \end{align}
    where $k_\omega^{\{i,j\}}=k_\omega$ because $k_\omega^{\{i,j\}}$ are designed as in Lemma \ref{lem: exp_stab} and $\mu_{R_e^{\{i,j\}}}^*=\mu_{R_e}^*$. On the other hand, let us consider the Lyapunov function
    \begin{align*}
        V(R_e^{ij}) = \frac{1}{2}{\mu_{R_e^{ij}}}^2 = \frac{1}{2}\text{tr}\left(\log(R_e^{ij})^\top \log(R_e^{ij})\right),
    \end{align*}
    where $R_e^{ij} := (R_a^\top R_j)^\top R_a^\top R_i = R^{ij}$. Hence, following the same steps as in \eqref{eq: V_dot_B}, its time derivative leads us to
    \begin{align} \label{eq: V_dot_small}
        \dot V(R_e^{ij}) &= \text{tr}\left(\log(R_e^{ij})^\top \Omega_{ij}\right)\\
        &= -k_w\text{tr}\left(\log(R_e^{ij})^\top \left(\log(R_e^i) - Ad_{R_{ij}^\top}\left(\log(R_e^j)\right)\right)\right)\nonumber.
    \end{align}
    
     Since $\mu_{R_e}^*$ is designed accordingly to Lemma \ref{lem: exp_stab}, we have that if $\delta^*$ is sufficiently small then $\mu_{R_e^{\{i,j\}}}$ are too, i.e., $\mu_{R_e^{\{i,j\}}} \leq \mu_{R_e}^* \leq \delta^*$. Thus, substituting \eqref{eq: Ad_series} in \eqref{eq: omega_ij} yields
    \begin{align*}
        \Omega_{ij} &= -k_w\left(\log(R_e^i) - \sum^\infty_{m=0}\frac{(\mathrm{ad}_{\log(R_{ij})})^m}{m!}(\log(R_e^j))\right)\\
		&= -k_w(\log(R_e^i) - \log(R_e^j)) + \text{H.O.T.},
    \end{align*}
	and considering the \textit{Baker-Campbell-Hausdorff formula} \cite{bch_notes} we have that $\log(R_e^{ij}) = \log(R_e^i) - \log(R_e^j) + \text{H.O.T.}$. Consequently, when $\delta^*$ is sufficiently small \eqref{eq: V_dot_small} yields
    \begin{align*}
		\dot V(R_e^{ij}) &= -k_w \text{tr}\left(\log(R_e^{ij})^\top \log(R_e^{ij})\right) + \text{H.O.T.} \\ &= -k_w V(R_e^{ij}) + \text{H.O.T.},
    \end{align*}
	i.e., $\mu_{R_e^{ij}}(t) = \mu_{R_e^{ij}}(t_0) e^{-k_w t}$. Moreover, when $\delta^*$ is sufficiently small, we know from \cite[Theorem 4.7]{khalil} that the higher order terms are not an issue concerning the exponential convergence. Therefore, following the same steps as in the proof of Lemma \ref{lem: pij} we have that
    \begin{align*}
        \|\dot p_{ij}(t)\| &\leq \sqrt{3} \mu_{R_e^{ij}} (t_0)e^{- k_w t} \\
        &\leq \sqrt{3}\left(\mu_{R^i_e}(t_0) + \mu_{R^j_e}(t_0)\right)e^{- k_w t}\\
        &\leq 2\sqrt{3}\mu_{R_e}^* e^{- k_w t}
    \end{align*}
    thus
    \begin{align*}
        \|p_{ij}(t) - p_{ij}(t_0)\| &\leq 2\sqrt{3}\mu_{R_e}^* \int_0^\infty  e^{- k_w s} \mathrm{ds} \\
        &= \frac{2\sqrt{3}\mu_{R_e}^*}{k_w}.
    \end{align*}
\end{proof}

Ideally, we would like to have (almost) global exponential stability but note that on the proof of Proposition \ref{prop: local_exp_stability} we only show exponential stability when the distance between $R_i$ and $R_j$ is sufficiently small. Therefore, although this result is useful because it shows that $R_i = R_j$ is an expoentially stable equilibrium point, it does not enable us to bound the distance between robots when they start from distant points in $\text{SO}(3)$. Nonetheless, when we focus on 2D unicycle robots, Corollary \ref{cor: 2d} shows that it is straightforward to demonstrate global exponential stability. Consequently, we will show in the simulations of the following section that $\mu_{R_e^{ij}}$ converges to zero exponentially fast.

\begin{coroll} \label{cor: 2d}
    Consider two 2D unicycle robots $i,j$ tracking a vector field $m_d(t)$ with the control law 
    $$\omega = -k_w\delta + \omega_d,$$
    where $\omega$ is the angular heading velocity, $\delta$ is the heading angle error, and $\omega_d = \|\dot m_d(t)\|$. Define $\delta_{ij} := (\delta_j - \delta_i) \in (-2\pi,2\pi)$, so that $\delta_{ij} = 0$ means that robots $i,j$ share the same heading. Hence,
    $$\dot \delta_{ij} = \omega_j - \omega_i = - k_w \delta_{ij},$$
    i.e., $\delta_{ij}(t) = \delta_{ij}(0)e^{-k_w t}$, so $\delta_{ij}$ \textit{globally} converges to zero exponentially fast.
\end{coroll}
For more details on Corollary \ref{cor: 2d} we refer to \cite{acuaviva2023resilient}. In particular, it is shown that the distance 
\begin{equation}
\|p_{ij}(t) - p_{ij}(0)\| \leq \frac{2\pi}{k_\omega}, \, \forall 1\leq i < j \leq N,
\end{equation}
for 2D unicycle robots on the same plane for all initial conditions of $\delta_{ij}$.

\begin{figure}[t]
    \centering
    \includegraphics[trim={0cm 0cm 0cm 0cm}, clip, width=0.98\columnwidth]{./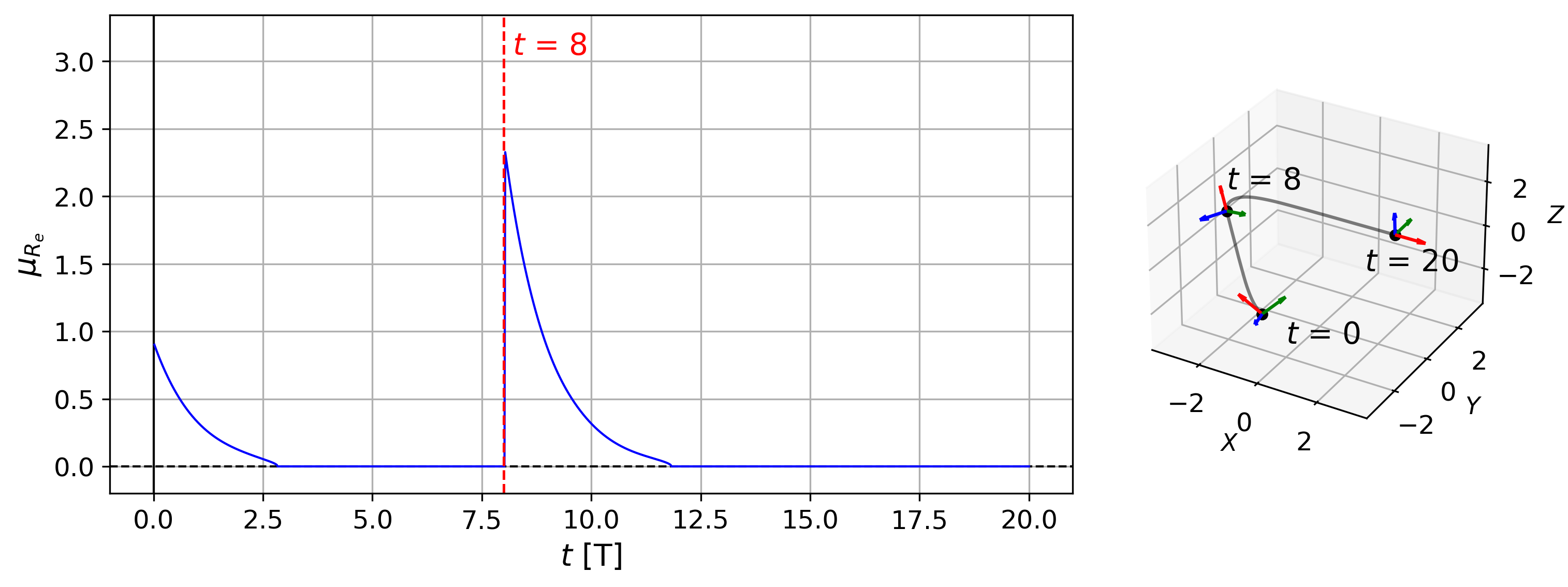}
    \caption{A unicycle robot moving at the constant speed $v = 0.5$ \textit{space unit/time unit} along its $x$ body axis aligns with a time-varying attitude target $R_a(t) = [x_a \; y_a(t) \; z_a(t)]$, with $x_a,y_a,z_a \in S^2$ as in \eqref{eq: R}. The time variation of $y(t)$ and $z(t)$ are given by the body-fixed angular velocity vector $\omega^k = [\pi,0,0]$ in \textit{radians/time unit}, which is known. The control law for alignment is given by \eqref{eq: omega_control}, with $k_\omega = 1$. Note that, at $t=8$, $x_a$ instantly changes from $[-1,1,1]$ to $[1,0,0]$. The left plot shows the time evolution of the attitude error $\mu_{R_e}$. The right plot depicts the trajectory of the unicycle along with its attitude vectors $x$, $y$ and $z$ represented in red, green and blue, respectively.}
    \label{fig: thm1}
\end{figure}

\begin{figure}[t]
    \centering
    \includegraphics[trim={0cm 0cm 0cm 0cm}, clip, width=0.98\columnwidth]{./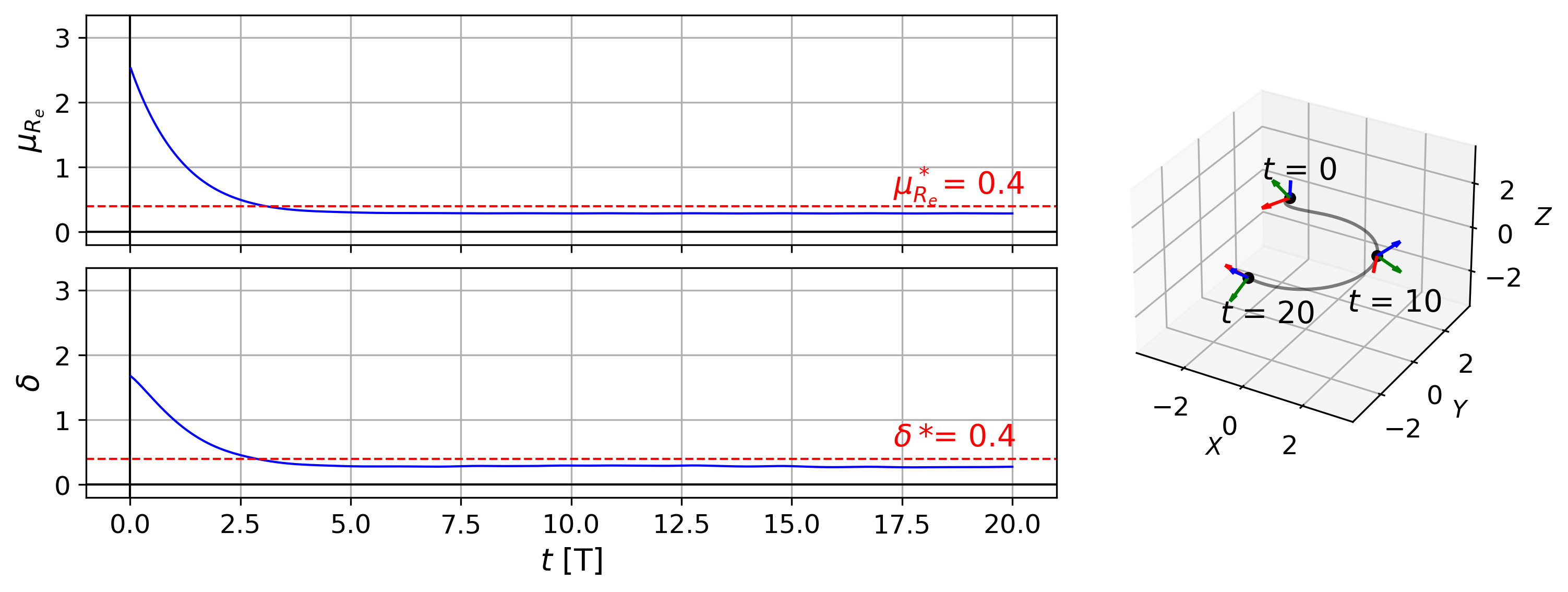}
    \caption{A unicycle robot moving at the constant speed $v = 0.5$ \textit{space unit/time unit} along its $x$ body axis aligns with a time-varying attitude target $R_a(t) = [x_a(t) \; y_a(t) \; z_a(t)]$, with $x_a,y_a,z_a \in S^2$ as in \eqref{eq: R}. The time variation of $R_a(t)$ is given by the earth-fixed angular velocity vector $w(t) = R_a(t)^\top w^k + w^u$, where $\omega^k = [\pi,0,0]$ in \textit{radians/time unit} and $\omega^u = [0,0,-w_d]$ with $w_d = \pi/14$ \textit{radians/time unit}. The known variables are $w^k$ and $\|\omega^u\| = w_d$, so the control law for alignment is given by \eqref{eq: omega_control_know}, with $k_\omega = \sqrt{2}w_d/\mu_{R_e}^*$ and $\mu_{R_e}^* = \delta^* = 0.4$. The left plot shows the time evolution of the attitude error $\mu_{R_e}$ and $\delta$. The right plot depicts the trajectory of the unicycle along with its attitude vectors $x$, $y$ and $z$ represented in red, green and blue, respectively.}
    \label{fig: thm2}
\end{figure}

\begin{figure}[t]
    \centering
    \includegraphics[trim={0cm 0cm 0cm 0cm}, clip, width=0.98\columnwidth]{./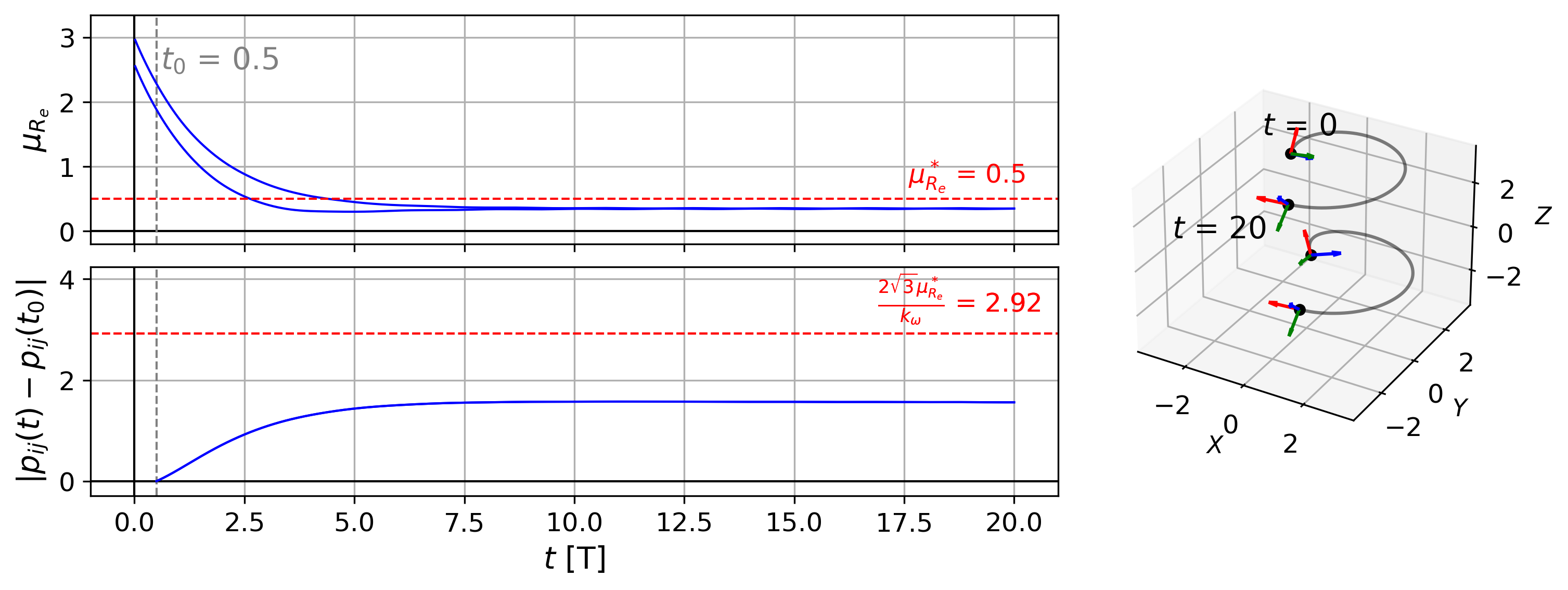}
    \caption{Two unicycle robots moving at the same constant speed $v = 0.5$ \textit{space unit/time unit} along their $x$ body axis align with the time-varying attitude target $R_a(t) = [x_a(t) \; y_a(t) \; z_a(t)]$, with $x_a,y_a,z_a \in S^2$ as in \eqref{eq: R}. The time variation of $R_a(t)$ is given by the earth-fixed angular velocity vector $w(t) = R_a(t)^\top w^k + w^u$, where $\omega^k = [\pi,0,0]$ in \textit{radians/time unit} and $\omega^u = [0,0,-w_d]$ with $w_d = \pi/15$ \textit{radians/time unit}. The known variables are $w^k$ and $\|\omega^u\| = w_d$, so the control law for alignment is given by \eqref{eq: omega_control_know}, with $k_\omega = \sqrt{2}w_d/\mu_{R_e}^*$ and $\mu_{R_e}^* = 0.5$ for both robots. The left plot shows the time evolution of the attitude error $\mu_{R_e}$ and $\|p_{ij}(t) - p_{ij}(t_0)\|$ for $t_0 = 0.5$. The right plot depicts the trajectory of the two unicycle robots along with their attitude vectors $x$, $y$ and $z$ represented in red, green and blue, respectively.}
    \label{fig: prop2}
\end{figure}

\begin{figure}[t]
    \centering
    \includegraphics[trim={0cm 0cm 0cm 0cm}, clip, width=0.98\columnwidth]{./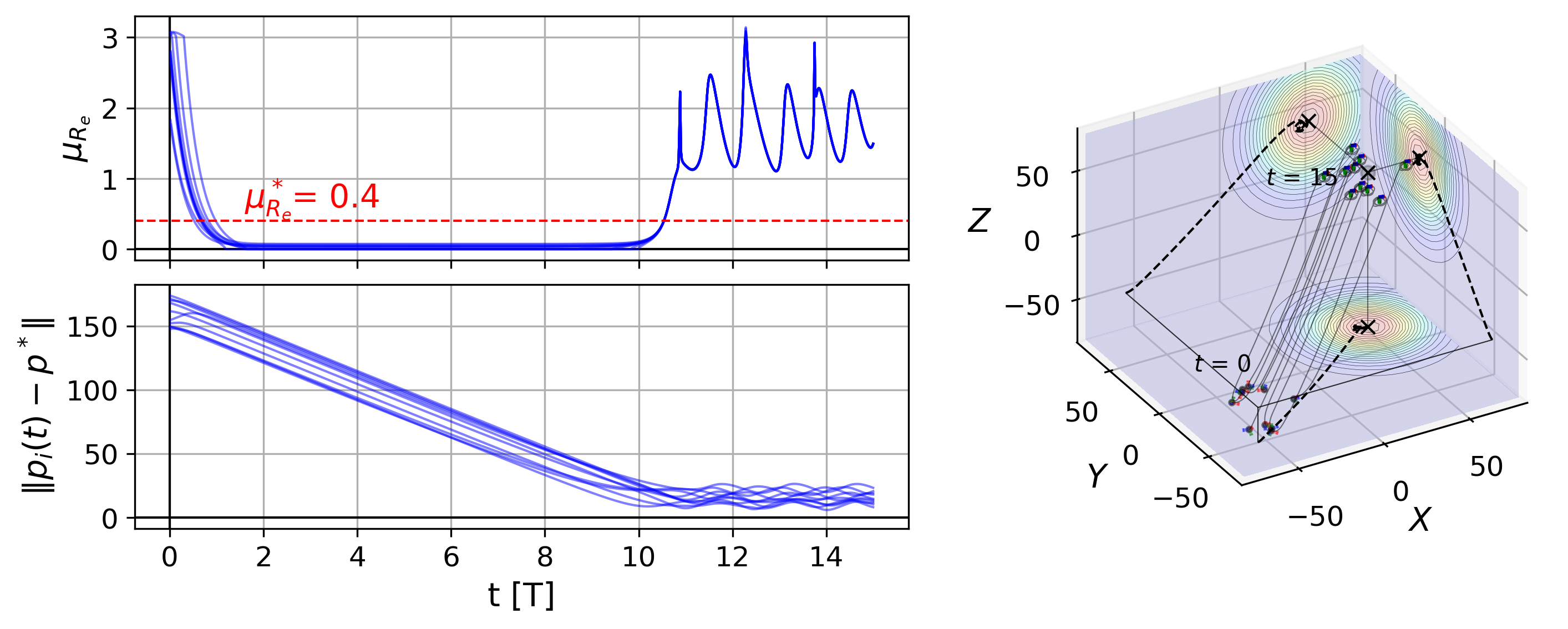}
    \caption{A robot swarm of $N=10$ unicycle robots moving at the same constant speed $v = 15$ \textit{space unit/time unit} along their $x$ body axis seek the source of a scalar field $\sigma(p): \mathbb{R}^3 \rightarrow \mathbb{R}^3$ by aligning with the time-varying attitude target $R_a(t) = [m_d(t) \; y_a(t) \; z_a(t)]$, with $m_d,y_a,z_a \in S^2$ as in \eqref{eq: R}, and where $m_d$ is an ascending direction calculated as in \cite{acuaviva2023resilient}. The time variation of $R_a(t)$ is given by the earth-fixed angular velocity vector $w(t) = R_a(t)^\top w^k + w^u$, where $\omega^k = [\pi,0,0]$ in \textit{radians/time unit} and $w^u \in \mathbb{R}^3$ depends on the scalar field and the position of each agent, as it is deeply explained in \cite{acuaviva2023resilient}. Each robot of the swarm knows $w^k$ and $w_d = \pi/4$ \textit{radians/time unit}, where $w_d \geq \|w^u\|$ when the centroid of the swarm $p_c(t)$ is in a certain set where $\|\nabla\sigma(p_c(t))\| > \epsilon$ for a given $\epsilon > 0$ \cite[Lemma 4.4.]{tfg_antonio}. The control law for alignment is given by \eqref{eq: omega_control_know}, with $k_\omega = \sqrt{2}w_d/\mu_{R_e}^*$ and $\mu_{R_e}^* = 0.4$ for all robots. The left plot shows the time evolution of the attitude error $\mu_{R_e}$ and $\|p_{i}(t) - p^*\|$, where $p^*$ is the source of the scalar field. The right plot depicts the $x$, $y$, and $z$ projections of the scalar field, along with the trajectory of all robots and their attitude vectors $x$, $y$, and $z$ represented in red, green, and blue, respectively.}
    \label{fig: sim_ss}
\end{figure}

\section{Simulations}

In this section, we validate numerically our results by conducting four different simulations. Firstly, in Figure \ref{fig: thm1}, a unicycle robot aligns with an attitude target with a known time variation, which shows that Problem \ref{prolem: main} is solved by Theorem \ref{thm: 1}. In Figure \ref{fig: thm2}, the same unicycle robot aligns with an attitude target with a non-totally known time variation, showing that Problem \ref{prolem: omega_unk} is solved by Theorem \ref{thm: 2}. Another robot is introduced in Figure \ref{fig: prop2} to illustrate that the distance between two unicycle robots is exponentially stable and can be bounded as in Proposition \ref{prop: local_exp_stability}. Finally, in Figure \ref{fig: sim_ss}, we apply all the tools introduced in this work to the source seeking problem in 3D. In the last two simulations, we observe numerically that the exponential convergence of $\mu_{R_e^{ij}}$ to zero is global.


\section{Conclusion}

In this technical note, we introduced roboticists to the essential mathematical tools and techniques for analysing and implementing geometric controllers to align 3D robots with non-constant 3D vector fields. Additionally, we show how to apply this mathematical framework by exploring practical problems where the time variation of the vector field is unknown, and by presenting some technical results with detailed calculations. In future revisions of this work, we aim to extend the proof of Proposition \ref{prop: local_exp_stability} for global exponential stability and to explore different practical problems that will enable us to introduce more advanced analysis tools.

\begin{appendix}

\subsection{The calculation of geodesic distance in $S^2$} \label{ap: s2}

Let $x,y \in S^2$ and an arc $\gamma(t) = (t, h(t)) \subset S^2$, where $h(t): \mathbb{R}^+ \rightarrow \mathbb{R}^2$. Consider that this arc connects $x$ with $y$, so $x = \gamma(t_x) = (t_x, h(t_x))$ and $y = \gamma(t_y) = (t_y, h(t_y))$ for a given $t_y \geq t_x$. Hence, given $\langle x , y \rangle = x \cdot y = x^\top y$ as a valid Riemannian metric for $S^2$, the definition of Riemannian length in \eqref{eq: rieman_len} yields
$$
\ell(\gamma)  = \int \sqrt{\langle \dot\gamma(t) , \dot\gamma(t) \rangle_{\gamma(t)}} \mathrm{d}t = \int \|\dot\gamma(t)\| \mathrm{d}t,
$$
where $\dot \gamma(t) = (1,\dot h(t))$, an so $\|\dot \gamma(t)\| = \sqrt{1 + \|\dot h(t)\|^2}$. 

Note now that $t^2 + h(t)^\top h(t) = \langle \dot\gamma(t) , \dot\gamma(t) \rangle = 1$, so differentiating both sides with respect to $t$ yields
$$
2t + 2h(t)^\top \dot h(t) = 0 \quad \Leftrightarrow \quad t + h(t)^\top \dot h(t) = 0.
$$
Considering this identity in the Cauchy Schwartz inequality leads us to
$$
\|\dot h(t)\|^2 \|h(t)\|^2 \geq \|h(t)^\top \dot h(t)\|^2 = t^2,
$$
so
$$
\|\dot h(t)\|^2 \geq \frac{t^2}{\|h(t)\|^2} \quad \Leftrightarrow \quad 1 + \|\dot h(t)\|^2 \geq \frac{t^2}{\|h(t)\|^2} + 1.
$$
But $\|h(t)\|^2 = 1 - t^2$, thus
$$
\sqrt{1 + \|\dot h(t)\|^2} \geq \sqrt{1 + \frac{t^2}{1 - t^2}} = \sqrt{\frac{1}{1 - t^2}},
$$
and so 
\begin{align*}
    \ell(\gamma) = \int_{t_x}^{t_y} \sqrt{1 + \|\dot h(t)\|^2} \mathrm{d}t 
    &\geq \int_{t_x}^{t_y} \sqrt{\frac{1}{1 - t^2}} \mathrm{d}t
\end{align*}
This last integral represents the minimum length of an arc connecting $x$ to $y$. In fact, this is the definition of length-minimizing geodesic $\gamma_g$. Therefore, if we assume without loss of generality that $x = (a,s,0)$ and $y = (1,0,0)$, then we have 
\begin{align*}
    \ell(\gamma_g) = \int_{a}^{1} \sqrt{\frac{1}{1 - t^2}} \mathrm{d}t = -\arccos(t) \Big|_{a}^{1} &= \arccos(a) \\
    &= \arccos(x^\top y).
\end{align*}

\end{appendix}


\bibliographystyle{IEEEtran}
\bibliography{biblio}

\end{document}